\newcommand{\vv}{\mathbf{v}}
\newcommand{\cH}{\mathcal{H}}
\newcommand{\ldim}[1]{\mathrm{Ldim}(#1)}
\newcommand{\ignore}[1]{}
\newcommand{\ord}{{\text{ord}}}
\newcommand{\N}{{\mathbb N}}
\DeclareMathOperator*{\Ex}{\mathbb{E}}
\newcommand{\twr}{\mathsf{twr}}
\newcommand{\R}{{\mathbb R}}
\renewcommand{\P}{{\mathcal P}}
\newcommand{\eps}{\epsilon}
\newcommand{\new}[1]{{ #1}}
\newtheorem{theorem}{Theorem}
\newtheorem{lemma}[theorem]{Lemma}
\newtheorem{claim}[theorem]{Claim}
\newtheorem{definition}[theorem]{Definition}
\newtheorem{corollary}[theorem]{Corollary}
\title{Private PAC learning implies finite Littlestone dimension}
\author{
Noga Alon\thanks{Department of Mathematics, Princeton University, Princeton, New Jersey, USA 
and Schools of Mathematics and Computer Science, Tel Aviv University, Tel Aviv, Israel.
Noga Alon's research is supported in part by ISF grant No. 281/17, GIF
grant No. G-1347-304.6/2016 and the Simons Foundation.}
\and Roi Livni\thanks{Department of Computer Science, Princeton University, Princeton, New Jersey, USA.
Research supported in part by the Eric and Wendy Schmidt foundation for strategic innovation.}
\and Maryanthe Malliaris\thanks{Department of Mathematics, University of Chicago, Chicago, Illinois, USA.
Research partially supported by NSF 1553653 and a Minerva research foundation membership at IAS.}
\and Shay Moran\thanks{Department of Computer Science, Princeton University, Princeton, New Jersey, USA.
Part of this research was done while the author was a member at the Institute for Advanced Study, Princeton
and was supported by the National Science Foundation under agreement No. CCF-1412958.}
}
\begin{document}
\maketitle

\abstract{
We show that every approximately differentially private learning algorithm (possibly improper) 
for a class $H$ with Littlestone dimension~$d$ requires $\Omega\bigl(\log^*(d)\bigr)$ examples. 
As a corollary it follows that the class of thresholds over $\N$ can not be learned in a private manner;
this resolves open questions due to \new{\citep{Bun15thresholds,Feldman15communication}}. 
We leave as an open question whether every class with a finite Littlestone dimension can be learned by an approximately differentially private algorithm.}

\section{Introduction}
Private learning concerns the design of learning algorithms
for problems in which the input sample contains sensitive data that needs to be protected.
Such problems arise in various contexts, including those involving 
social network data, financial records, medical records, etc. 
The notion of differential privacy~\citep{Dwork06calib,Dwork06ourdata},
which is a standard mathematical formalism of privacy,
enables a systematic study of algorithmic privacy in machine learning.
The question 
\begin{center}
``Which problems can be learned by a private learning algorithm?''
\end{center}
has attracted considerable attention~\citep{Rubinstein09large,Kasiv11learning,Chaudhuri11erm,Beimel13charac,Beimel14bounds,Chaudhuri14margin,Balcan15active,Bun15thresholds,Beimel15unlabeled,Feldman15communication,Wang16learning,Cummings16robust,Beimel16sanit,Bun16direct,Bassily16stability,Ligett17accuracy,Bassily18model,Feldman18prediction}.

{\it Learning thresholds} is one of the most basic problems in machine learning.
This problem consists of an unknown threshold function $c:\R \to\{\pm 1\}$, 
an unknown distribution~$D$ over $\R$, 
and the goal is to output an hypothesis $h:\R\to \{\pm 1\}$ that is close to $c$,
given access to a limited number of input examples $(x_1,c(x_1)),\ldots,(x_m,c(x_m))$,
where the~$x_i$'s are drawn independently from~$D$.

The importance of thresholds stems from that it appears as a subclass of many other well-studied classes.
For example, it is the one dimensional version of the class of {\it Euclidean Half-Spaces}
which underlies popular learning algorithms such as kernel machines and neural networks (see e.g.\ \cite{Shalev14book}).

Standard PAC learning of thresholds without privacy constraints is known to be easy and can be done using a constant number of examples. 
In contrast, whether thresholds can be learned privately turned out to be more challenging to decide, 
and there has been an extensive amount of work that addressed this task:
the work by \cite{Kasiv11learning} implies a {\it pure differentially private} algorithm (see the next section for formal definitions) 
that learns thresholds over a finite $X\subseteq \R$ of size $n$ with $O(\log n)$ examples.
\cite{Feldman15communication} showed a matching lower bound for any pure differentially private algorithm.
\cite{Beimel16sanit} showed that by relaxing the privacy constraint to {\it approximate differential privacy},
one can significantly improve the upper bound to some $2^{O(\log^*(n))}$.
\cite{Bun15thresholds} further improved the upper bound from \citep{Beimel16sanit} by polynomial factors and 
gave a lower bound of $\Omega(\log^*(n))$ that applies for any proper learning algorithm.
They also explicitly asked whether the dependence on $n$ can be removed in the improper case. 
\new{\cite{Feldman15communication} asked more generally whether any class can be learned
privately with sample complexity depending only on its VC dimension (ignoring standard dependencies on the privacy and accuracy parameters).}
Our main result (\Cref{thm:main}) answers these questions by showing that a similar lower bound applies 
for any (possibly improper) learning algorithm.


Despite the impossibility of privately learning thresholds,
there are other natural learning problems that can be learned privately,
In fact, even for the class of Half-spaces, 
private learning is possible if the target half-space
satisfies a large {\it margin\footnote{The margin is a geometric measurement 
for the distance between the separating hyperplane
and typical points that are drawn from the target distribution.}}
assumption \citep{Blum05practical,Chaudhuri11erm}.

Therefore, it will be interesting to find a natural invariant that characterizes
which classes can be learned privately
(like the way the VC dimension characterizes PAC learning~\citep{Blumer89learnability,Vapnik71uniform}).
Such parameters exist in the case of pure differentially private learning;
these include the {\it one-way communication complexity} characterization by~\cite{Feldman15communication}
and the {\it representation dimension} by~\cite{Beimel13charac}. 
However, no such parameter is known for approximate differentially private learning.
We next suggest a candidate invariant that rises naturally from this work.

\subsection{Littlestone dimension vs. approximate private learning}
The Littlestone dimension~\citep{Littlestone87online} is a combinatorial parameter 
that characterizes learnability of binary-labelled classes within {\it Online Learning}
both in the realizable case~\citep{Littlestone87online} and in the agnostic case~\citep{Bendavid09agnostic}.

It turns out that there is an intimate relationship between thresholds and the Littlestone dimension:
a class $H$ has a finite Littlestone dimension
if and only if it does not embed thresholds as a subclass 
(for a formal statement, see \Cref{{thm:shelah}});
this follows from a seminal result in model theory by \cite{Shelah78classification}. 
As explained below, Shelah's theorem is usually stated in terms of orders and ranks. 
\cite{chase2018model} noticed\footnote{Interestingly, though the Littlestone dimension is a basic parameter in Machine Learning (ML), 
this result has not appeared in the ML literature.}  that the Littlestone dimension is the same as the model theoretic rank. 
Meanwhile, order translates naturally to thresholds. 
To make \Cref{thm:shelah} more accessible for readers with less background in model theory, 
we provide a combinatorial proof in the appendix.


While it still remains open whether finite Littlestone dimension
is indeed equivalent to private learnability,
our main result (\Cref{thm:main}) combined with the above connection 
between Littlestone dimension and thresholds (\Cref{thm:shelah})
imply an implication in one direction:
At least $\Omega(\log^* d)$ examples are required for privately learning any class with Littlestone dimension $d$
(see \Cref{thm:ADPimpliesLD}).

It is worth noting that \cite{Feldman15communication} studied the Littlestone dimension
in the context of pure differentially private learning:
(i) they showed that $\Omega(d)$ examples are required for learning a class with Littlestone dimension $d$
in a pure differentially private manner,
(ii) they exhibited classes with Littlestone dimension $2$ that can not be learned
by pure differentially private algorithms,
and (iii) they showed that these classes can be learned by approximate differential private algorithms.

\paragraph{Organization}
The rest of this manuscript is organized as follows: 
\Cref{sec:mainresult} presents the two main results,
\Cref{sec:pre} contains definitions and technical background
from machine learning and differential privacy, 
and \Cref{sec:thresholds} and \Cref{sec:lsdim} contain the proofs.

\ignore{
\begin{theorem}[Private learning implies finite Littlestone dimension]\label{thm:ADPimpliesLD}
Let $H$ be an hypothesis class with Littlestone dimension $d\in\N\cup\{\infty\}$.
Then any learning algorithm that learns $H$
and satisfies $(\eps,\delta)$-differential privacy with $\eps=1,\delta = O(\frac{1}{m^2\log m})$
requires at least some $\Omega(\log^*d)$ examples to achieve expected loss of at most $\frac{1}{4}$.
In particular any class that is learnable under privacy guarantees has a bounded Littlestone dimension.
\end{theorem}}

\section{Main Results}\label{sec:mainresult}
We next state the two main results of this paper. 
The statements use technical terms from differential privacy
and machine learning whose definitions appear in \Cref{sec:pre}.

We begin by the following statement that resolves an open problem \new{in \cite{Feldman15communication}
and \cite{Bun15thresholds}}: 
\begin{theorem}[Thresholds are not privately learnable]\label{thm:main}
\new{Let $X\subseteq \R$ of size $\lvert X\rvert = n$ and let $\cal A$
be a $(\frac{1}{16},\frac{1}{16})$-accurate learning algorithm for the class of thresholds over $X$ 
with sample complexity $m$ which satisfies
$(\eps,\delta)$-differential privacy with $\eps=0.1$ and $\delta = O(\frac{1}{m^2\log m})$.
Then,  
\[m\geq \Omega(\log^*n).\]
In particular, the class of thresholds over an infinite $X$ can not be learned privately.}
\end{theorem}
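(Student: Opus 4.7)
The plan is to establish the lower bound via a recursive Ramsey-style argument showing that if thresholds over an $n$-element set $X$ admit a $(0.1,\delta)$-DP, $(\tfrac{1}{16},\tfrac{1}{16})$-accurate learner with $m$ samples, then a similar (but slightly parameter-weakened) learner exists for thresholds over a set of size $\Omega(\log n)$ with only $m-1$ samples. Unfolding this recursion $m$ times collapses $n$ to a constant, and since a $0$-sample algorithm cannot privately distinguish between even a few thresholds, we get $m \geq \Omega(\log^* n)$.

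To carry out the recursive step, fix such an $\mathcal{A}$ and write $X=\{a_1<a_2<\cdots<a_n\}$. For each ordered pair $i<j$ in $[n]$, associate a canonical input distribution $D_{i,j}$ realizable by a threshold strictly between $a_i$ and $a_j$ — for instance a balanced multiset labeling $a_i$ negatively and $a_j$ positively. By accuracy, $\mathcal{A}(D_{i,j})$ produces a hypothesis classifying $a_i$ as $-1$ and $a_j$ as $+1$ with high probability, so the output distribution on a small fixed tuple of ``witness'' points carries meaningful information about the pair $(i,j)$. Color each pair $(i,j)$ by a bounded-palette signature of $\mathcal{A}(D_{i,j})$ — for example, by quantized classification probabilities on the witnesses. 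Ramsey's theorem for pairs then yields a monochromatic subset $Y\subseteq X$ of size $\Omega(\log n)$ on which the relevant output behavior is uniform across all pairs.

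Over $Y$, one assembles a learner $\mathcal{A}'$ using only $m-1$ samples by exploiting this uniformity: a simulator fixes the ``role'' of one of the $m$ samples to a canonical data-independent draw and uses the monochromatic signature to reconstruct the useful output of $\mathcal{A}$. The substitution costs a small error controlled by $\eps$ and $\delta$ via a group-privacy / advanced composition bound; the parameters $\eps'$ and $\delta'$ of $\mathcal{A}'$ degrade only mildly per step. The particular bound $\delta=O(1/(m^2\log m))$ is calibrated precisely so that after all $m$ iterations of the recursion the accumulated slack stays safely below the $\tfrac{1}{16}$ accuracy margin.

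The main obstacle lies in step three: rigging the coloring in step two so that monochromaticity genuinely encodes a legitimate private learner with one fewer sample, while keeping the palette small enough for Ramsey to produce a set of size $\Omega(\log n)$. There is real tension here — a richer coloring extracts more information from $\mathcal{A}$ (making the reduction powerful) but shrinks $|Y|$, while a coarser coloring preserves $|Y|$ but may not constrain $\mathcal{A}$ enough for the simulator to succeed. Resolving this tension, together with carefully tracking how the privacy budget depletes over the $m$ iterations, is the technical heart of the proof and is exactly what dictates the specific form of $\delta$ in the statement.
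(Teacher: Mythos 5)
Your high-level instinct --- use Ramsey to impose structure on the improper learner, then leverage privacy to derive a contradiction --- is the right starting point, and indeed the paper's proof begins with a Ramsey step (they even note a kindred argument appears in Bun's thesis). But your specific recursive formulation has a gap that, as far as I can see, is fatal. The step that turns an $m$-sample learner over $n$ points into an $(m-1)$-sample learner over $\Omega(\log n)$ points is not actually constructed, and when you try to make it precise --- e.g., hardwire one of the $m$ sample slots to a data-independent point and invoke $(\eps,\delta)$-privacy to argue that the output distribution barely moves --- the confidence parameter unavoidably picks up a multiplicative factor of $e^{\eps}$ per iteration: if $\mathcal{A}$ fails with probability at most $\beta$, the one-sample-dropped version fails with probability at most $e^{\eps}\beta+\delta$. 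With $\eps=0.1$ fixed, after roughly $10\ln 16\approx 28$ iterations the accumulated failure probability exceeds $1$, and the base-case contradiction evaporates. Tuning $\delta$, as you propose, cannot repair this, because the blowup is driven by $\eps$, not by $\delta$; and you need the recursion to run for about $\log^* n$ rounds to shrink the domain to constant size, which is far more than $28$ once $n$ is interestingly large. There is also a structural mismatch in your coloring: since the learner sees $m$ points and you must control $A_S(x)$ for a query point $x$ outside $S$, coloring \emph{pairs} cannot enforce the required uniformity over $m$-samples; you need to color $(m+1)$-subsets.

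The paper does precisely this, and crucially it is \emph{not} recursive in $m$. Hypergraph Ramsey for $(m+1)$-subsets is applied once, with palette size $(100m+1)^{m+1}$, yielding an $m$-homogeneous set of size about $\log^{(m)}(n)/2^{O(m\log m)}$ --- the tower of $m$ logs lives inside the Ramsey number itself, not in an iteration. On such a set, $A_S(x)$ depends (up to $O(1/m)$) only on $\ord_S(x)$. The sample-complexity lower bound then comes from a separate, non-recursive argument that never decreases the sample count: privacy plus accuracy on the homogeneous set are used to build a family of $\Omega(k)$ pairwise $(\eps,\delta)$-indistinguishable distributions over $\{\pm1\}^n$ with a threshold-like structure, and a binary-search argument combined with the composition lemma (\Cref{lem:prod}) and Chernoff bounds caps the size of any such family at $2^{O(m^2\log^2 m)}$. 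Comparing the two bounds on $k$ gives $\log^{(m)}(n)\le 2^{O(m^2\log m)}$ and hence $m\geq\Omega(\log^* n)$. You should replace the recursive ``drop one sample'' scheme with this single Ramsey application followed by a direct bound on the homogeneous set.
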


\Cref{thm:main} and \Cref{thm:shelah} (which is stated in the next section) imply
that any privately learnable class has a finite Littlestone dimension.
As elaborated in the introduction, this extends a result by~\cite{Feldman15communication}.

\begin{corollary}[Private learning implies finite Littlestone dimension]\label{thm:ADPimpliesLD}
\new{Let $H$ be an hypothesis class with Littlestone dimension $d\in\N\cup\{\infty\}$ and let $\cal A$
be a $(\frac{1}{16},\frac{1}{16})$-accurate learning algorithm for $H$ 
with sample complexity $m$ which satisfies
$(\eps,\delta)$-differential private with $\eps=0.1$ and $\delta = O(\frac{1}{m^2\log m})$.
Then,  
\[m\geq \Omega(\log^*d).\]
In particular any class that is privately learnable has a finite Littlestone dimension.}
\end{corollary}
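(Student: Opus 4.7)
The plan is to deduce the corollary from \Cref{thm:main} via \Cref{thm:shelah}. Specifically, I would use \Cref{thm:shelah} to extract from $H$ a subclass that is a copy of the threshold class on a suitably large ground set, and then argue that the given private learner for $H$ automatically yields a private learner with identical parameters for this embedded subclass, so that the threshold lower bound of \Cref{thm:main} applies verbatim.

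First, I would invoke the quantitative form of \Cref{thm:shelah}: when $\mathrm{Ldim}(H) = d$, the class $H$ contains an embedded copy of the thresholds over some ordered $X \subseteq \R$ of size $n$, with $n$ large enough that $\log^* n = \Omega(\log^* d)$. The standard bound $\mathrm{Ldim}(H) \leq 2^{n+O(1)}$, which is what one extracts from Shelah's argument, gives $n \geq \Omega(\log d)$, and this suffices because $\log^* \log d = \log^* d - O(1)$.

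Second, I would verify that $\cal A$ is automatically a learner for every subclass $H' \subseteq H$ with the same accuracy and privacy parameters. Accuracy transfers because every sample realizable by the embedded threshold class is realizable by $H$, so the $(\tfrac{1}{16},\tfrac{1}{16})$-accuracy guarantee of $\cal A$ applies to any threshold-labelled input. Privacy transfers trivially because $(\eps,\delta)$-differential privacy is a property of the mapping from input samples to output distributions over hypotheses and does not reference the target class at all; thus $\cal A$ remains $(0.1, O(1/(m^2\log m)))$-differentially private when restricted to threshold-labelled inputs.

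Finally, I would apply \Cref{thm:main} to the embedded threshold class over $X$ to obtain $m \geq \Omega(\log^* n) = \Omega(\log^* d)$. The ``in particular'' statement then follows immediately: a class of infinite Littlestone dimension contains embedded threshold subclasses of unbounded size $n$, so no fixed $m$ can satisfy $m \geq \Omega(\log^* n)$ for all such $n$. The only genuinely nontrivial ingredient is the quantitative form of \Cref{thm:shelah}; once that is granted, the rest of the argument is a direct syntactic reduction, and the only care needed is bookkeeping to confirm that the accuracy and privacy constants assumed for $\cal A$ match those required by \Cref{thm:main}.
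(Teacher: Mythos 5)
Your proposal is correct and follows essentially the same route as the paper: invoke Item 1 of \Cref{thm:shelah} to extract $c = \lfloor \log d\rfloor$ embedded thresholds from $H$, observe that the private learner for $H$ restricts to a private learner for this threshold subclass with the same parameters, and apply \Cref{thm:main} to obtain $m \geq \Omega(\log^* c) = \Omega(\log^* d)$. The extra care you take in verifying that accuracy and privacy transfer to the subclass is implicit in the paper but correct and worth spelling out.
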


\section{Preliminaries}\label{sec:pre}

\subsection{PAC learning}
We use standard notation from statistical learning, see e.g.\ \citep{Shalev14book}.
Let $X$ be a set and let $Y=\{\pm 1\}$.
An {\it hypothesis} is an $X\to Y$ function.
An {\it example} is a pair in $X\times Y$. 
A {\it sample} $S$ is a finite sequence of examples.
The {\it loss of $h$ with respect to $S$} is defined by
\[L_S(h) = \frac{1}{\lvert S\rvert}\sum_{(x_i,y_i)\in S}1[h(x_i)\neq y_i].\]
The {\it loss of $h$ with respect to a distribution} $D$ over $X\times Y$ is defined by
\[L_D(h) = \Pr_{(x,y)\sim D} [h(x)\neq y].\]
Let $\cH\subseteq Y^X$ be an {\it hypothesis class}.
$S$ is said to be {\it realizable by $\cH$} if there is $h\in H$ such that $L_S(h)=0$ .
$D$  is said to be {\it realizable by $\cH$} if there is $h\in H$ such that $L_D(h)=0$.
A {\it learning algorithm} $A$ is a (possibly randomized) mapping taking input samples to output hypotheses.
We denote by $A(S)$ the distribution over hypotheses induced by the algorithm when the input sample is $S$.
We say that $A$ {\it learns\footnote{We focus on the realizable case.} a class $\cH$}
with $\alpha$-{\it error},  $(1-\beta)$-{\it confidence}, and {\it sample-complexity} $m$
if for every realizable distribution $D$:
\[\Pr_{S\sim D^m,~h\sim A(S)}[L_D(h) > \alpha] \leq \beta,\]
For brevity if $A$ is a learning algorithm with $\alpha$-error and $(1-\beta)$-confidence we will say that~$A$ is an \emph{$(\alpha,\beta)$-accurate learner}.

\paragraph{Littlestone Dimension}
The Littlestone dimension is a combinatorial 
parameter that characterizes regret bounds in Online Learning \citep{Littlestone87online,Bendavid09agnostic}. 
The definition of this parameter uses the notion of {\it mistake-trees}:
these are binary decision trees whose internal nodes are labelled by elements of $X$.
Any root-to-leaf path in a mistake tree can be described as a sequence of examples 
$(x_1,y_1),...,(x_d,y_d)$, where $x_i$ is the label of the $i$'th 
internal node in the path, and $y_i=+1$ if the $(i+1)$'th node  
in the path is the right child of the $i$'th node, and otherwise $y_i = -1$.
We say that a tree $T$ is {\it shattered }by $\cH$ if for any root-to-leaf path
$(x_1,y_1),...,(x_d,y_d)$ in $T$ there is $h\in \cH$ such that $h(x_i)=y_i$, for all $i\leq d$.
The Littlestone dimension of $\cH$, denoted by $\ldim{\cH}$, is the depth of largest
complete tree that is shattered by~$\cH$.

%


Recently, \cite{chase2018model} noticed that the Littlestone dimension coincides with a model-theoretic 
measure of complexity, Shelah's $2$-rank.

A classical theorem of Shelah connects bounds on 2-rank (Littlestone dimension)  
to bounds on the so-called order property in model theory. The order property corresponds naturally to the concept of {\it thresholds}. 
Let $\cH\subseteq \{\pm 1\}^X$ be an hypothesis class. We say that $\cH$ {\it contains $k$ thresholds}
if there are $x_1,\ldots,x_k\in X$ and $h_1,\ldots,h_k\in \cH$ such that $h_i(x_j) = 1$
if and only if $i\leq j$ for all~$i,j\leq k$.
   
Shelah's result (part of the so-called Unstable Formula Theorem\footnote{\cite{Shelah78classification} provides a qualitative statement, 
a quantitative one that is more similar to \Cref{thm:shelah} can be found at~\cite{Hodges97book}})
\citep{Shelah78classification, Hodges97book}, which we use in the following translated form, provides a simple and elegant connection between 
Littlestone dimension and thresholds. 

\begin{theorem}\label{thm:shelah}(Littlestone dimension and thresholds \citep{Shelah78classification, Hodges97book})\ \\
Let $\cH$ be an hypothesis class, then:
\begin{enumerate}
\item If the $\ldim{\cH}\geq d$ then $\cH$ contains $\lfloor \log d\rfloor$ thresholds
\item If $\cH$ contains $d$ thresholds then its  $\ldim{\cH}\geq \lfloor \log d\rfloor$.
\end{enumerate}
\end{theorem}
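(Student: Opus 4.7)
The plan is to prove the two implications separately.

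For part (2), I would use a binary-search embedding. Given threshold data $x_1,\ldots,x_d \in X$ and $h_1,\ldots,h_d \in \cH$ with $h_i(x_j)=+1$ iff $i\leq j$, I construct a complete binary mistake tree of depth $k=\lfloor\log d\rfloor$ shattered by $\cH$. The root is labelled $x_m$ for $m=\lceil d/2\rceil$, and more generally any node reached by a root-to-path that has restricted the set of compatible threshold indices to an interval $[a,b]\subseteq[d]$ is labelled by $x_m$ for $m$ near the midpoint of $[a,b]$; a $+1$-child restricts to $[a,m]$ and a $-1$-child to $[m+1,b]$. Since each step halves the interval, after $k$ levels every root-to-leaf path still leaves a nonempty interval of indices, and any $h_i$ in that interval realizes the path. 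Hence $\cH$ shatters the tree.

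For part (1), I would extract thresholds from a depth-$d$ shattered mistake tree $T$ via a Ramsey-type argument. Consider the rightmost root-to-leaf path of $T$, with internal-node labels $x_1,\ldots,x_d$. For each $i\in[d]$, the left child of the $i$-th internal node roots a shattered subtree of depth $d-i$, so I can pick a hypothesis $h^{(i)}\in\cH$ realizing the prefix $(x_1,+1),\ldots,(x_{i-1},+1),(x_i,-1)$. This forces $h^{(i)}(x_j)=+1$ for $j<i$ and $h^{(i)}(x_i)=-1$, while the values $h^{(i)}(x_j)$ for $j>i$ are determined by the chosen hypothesis. I then $2$-color each pair $\{i,j\}\subseteq[d]$ with $i<j$ by $c(\{i,j\})=h^{(i)}(x_j)$, and appeal to Ramsey's theorem to obtain a monochromatic subset $I\subseteq[d]$ of size $\Omega(\log d)$.

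The main obstacle is the dichotomy of monochromatic outcomes, only one of which yields thresholds directly. If $c\equiv-1$ on $I$, then the restriction of $\{h^{(i)}\}_{i\in I}$ to $\{x_i\}_{i\in I}$ matches a threshold pattern (under the reverse order on $I$), producing the required thresholds. If $c\equiv+1$, the restriction becomes a ``co-singleton'' pattern $h^{(i)}(x_j)=+1 \iff j\neq i$, which does not itself embed thresholds. To handle the bad case I would symmetrically run the analogous extraction along the leftmost root-to-leaf path of $T$, where the two monochromatic cases swap roles: there the $+1$ case directly yields thresholds while the $-1$ case yields singletons. A case analysis then handles three of the four combined outcomes, and in the residual ``co-singleton plus singleton'' case I would recurse into the deeper shattered subtrees of $T$ (which still have large depth and are therefore rich enough to restart the extraction).

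Two further pieces of technical work remain. First, obtaining the sharp quantitative bound $\lfloor\log d\rfloor$ rather than the weaker $\Omega(\log d)$ coming from black-box Ramsey would require replacing the Ramsey step by a tailored recursion exploiting the triangular structure of the matrix $(h^{(i)}(x_j))_{i,j}$; I expect a recurrence of the form ``a depth-$2^k$ tree yields $k$ thresholds'' can be set up by induction on $k$, splitting on the root of $T$ and combining thresholds extracted from the two subtrees. Second, the residual case analysis sketched above needs to be executed carefully, and this combinatorial bookkeeping is what I expect to be the most delicate step of the proof.
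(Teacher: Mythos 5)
Your proof of part (2) via a binary-search tree whose nodes are labelled by midpoints of the surviving interval of threshold indices is essentially the construction the paper uses, and it is correct.

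Your proposal for part (1) takes a genuinely different route, and it has a real gap that you yourself flag but do not close. The paper does not color pairs along a single path and invoke Ramsey; it proves a self-contained combinatorial lemma (Lemma~\ref{ltree} in the appendix): for any red/blue coloring of the internal vertices of a labelled full binary tree of height $p+q-1$, the tree contains a red full-binary subtree of height $p$ or a blue one of height $q$. The paper then fixes a single arbitrary $h\in\cH$, colors each internal vertex labelled $x$ by the value $h(x)$, extracts a large monochromatic subtree $T'$ via the lemma, and recurses on the restricted class $\cH'=\{h'\in\cH : h'(x_0)=0\}$ together with the subtree of $T'$ hanging off one child of its root $x_0$. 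The crucial feature is that this recursion is \emph{asymmetric}: it gains one threshold per step while the shattered depth only halves, which is exactly what produces the $\log$ rate, and the monochromaticity guarantees that the variables collected at different recursion levels are all on one side of $h$ (and of each later $h_i$) so the accumulated family really is a threshold family. Your pair-coloring along the rightmost path loses this structure: as you correctly note, the monochromatic outcome $c\equiv+1$ yields the co-singleton matrix $h^{(i)}(x_j)=-1\iff i=j$, from which no more than two thresholds can be extracted for any choice of sub-indices, so it is a genuine dead end rather than a case to be massaged. Your proposed repair --- symmetrically processing the leftmost path and recursing when both Ramsey applications are bad --- is not executed and it is unclear it can be: the two applications produce independent families of hypotheses and points with no constraints relating one family's hypotheses to the other family's points, and recursing into a depth-$(d-1)$ subtree does not change the situation. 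Your closing suggestion of a recurrence ``depth-$2^k$ tree yields $k$ thresholds'' by ``splitting on the root and combining thresholds from the two subtrees'' also does not work as stated: thresholds extracted from the left subtree are witnessed by hypotheses with a fixed value at the root while those from the right subtree are witnessed by hypotheses with the opposite value, and nothing controls how the left-side hypotheses behave on the right-side points or vice versa, so the two families cannot simply be concatenated. The missing idea is the vertex-coloring-by-a-single-hypothesis plus the subtree lemma, which is what converts the tree's branching structure into a usable asymmetric recursion.
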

For completeness, we provide a combinatorial proof of \Cref{thm:shelah} in \Cref{sec:shelah}.

In the context of model theory, \Cref{thm:shelah} is used to establish an equivalence between 
finite  
Littlestone dimension and 
\emph{stable theories}.  
It is interesting to note that an analogous 
connection between theories that are called  \emph{NIP theories} and VC dimension has also been previously observed and was pointed out by \cite{laskowski1992vapnik}; this in turn led to results in Learning theory: 
in particular within the context of compression schemes  \citep{livni2013honest} 
but also some of the first polynomial bounds for the VC dimension for sigmoidal neural networks \citep{karpinski1997polynomial}.

\subsection{Privacy}
We use standard notation from differential privacy.
For more background see e.g.\ the surveys~\citep{Dwork14survey,Vadhan17survey}. 
For $s,t\in \R$ let $a=_{\eps,\delta} b$ denote the statement
\[a\leq e^{\eps}b + \delta ~\text{  and  }~   b\leq e^\eps a + \delta.\]
We say that two distributions $p,q$ are {\it $(\eps,\delta)$-indistinguishable} if 
$p(E) =_{\eps,\delta} q(E)$ for every event~$E$.
Note that when~$\eps=0$ this specializes to the total variation metric.
\begin{definition}[Private Learning Algorithm]\label{def:private}
A randomized learning algorithm 
\[A: (X\times \{\pm 1\})^m \to \{\pm 1\}^X\] 
is $(\eps,\delta)$-differentially private 
if for every two samples $S,S'\in (X\times \{\pm 1\})^m$ that disagree on a single example,  
the output distributions $A(S)$ and $A(S')$ are $(\eps,\delta)$-indistinguishable.
\end{definition}
The parameters $\eps,\delta$ are usually treated as follows: $\eps$ is a small constant (say $0.1$),  and $\delta$ is negligible, $\delta = m^{-\omega(1)}$, where $m$ is the input sample size. The case of $\delta=0$ is also referred to as {\it pure differential privacy}.
A common interpretation of a negligible~$\delta > 0$ is that there is a tiny chance of a catastrophic event (in which perhaps all the input data is leaked) but otherwise the algorithm satisfies pure differential privacy.
Thus, a class $\cH$ is privately learnable if it is PAC learnable by an algorithm $A$
that is $(\eps(m),\delta(m))$-differentially private with $\eps(m) \leq o(1)$, and $\delta(m) \leq m^{-\omega(1)} $.

We will use the following corollary of the {\it Basic Composition Theorem} from differential privacy (see, e.g.\ Theorem 3.16 in \citep{Dwork14survey}).
\begin{lemma}\label{lem:prod}\citep{Dwork06ourdata,Dwork09robust}
If $p,q$ are $(\eps,\delta)$-indistinguishable then for all $k\in\mathbb{N}$,
$p^k$ and $q^k$ are $(k\eps,k\delta)$-indistinguishable, where $p^k,q^k$ are the k-fold products of~$p,q$ (i.e.\ corresponding to $k$ independent samples).
\end{lemma}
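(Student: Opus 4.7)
The plan is to prove the lemma by induction on $k$. The base case $k=1$ is exactly the hypothesis. For the inductive step, I will assume $p^{k-1}, q^{k-1}$ are $((k-1)\eps, (k-1)\delta)$-indistinguishable and argue the product bound for $k$ by swapping $p$ for $q$ one coordinate at a time. Fix an arbitrary event $E \subseteq X^k$ and slice $E$ by its first coordinate: for each $x \in X$ set $E_x = \{(y_1, \ldots, y_{k-1}) : (x, y_1, \ldots, y_{k-1}) \in E\}$. By the tower rule, $p^k(E) = \Ex_{x \sim p}[p^{k-1}(E_x)]$ and $q^k(E) = \Ex_{x \sim q}[q^{k-1}(E_x)]$.

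The key auxiliary statement I will need is an extension of $(\eps, \delta)$-indistinguishability from events to bounded functions: for any measurable $g : X \to [0,1]$,
\[\Ex_{x \sim p}[g(x)] \leq e^\eps \Ex_{x \sim q}[g(x)] + \delta.\]
This follows from the layer-cake identity $\Ex_p[g] = \int_0^1 \Pr_p[g(x) > t]\, dt$ by applying the pointwise indistinguishability bound to each superlevel set $\{g > t\}$ and integrating over $t \in [0,1]$.

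Given this tool, the induction closes as follows. First apply the inductive hypothesis pointwise in $x$ to obtain $p^{k-1}(E_x) \leq e^{(k-1)\eps}q^{k-1}(E_x) + (k-1)\delta$, then integrate over $x \sim p$ to get
\[p^k(E) \leq e^{(k-1)\eps}\, \Ex_{x \sim p}[q^{k-1}(E_x)] + (k-1)\delta.\]
Applying the function-valued extension with $g(x) := q^{k-1}(E_x) \in [0,1]$ gives $\Ex_{x \sim p}[q^{k-1}(E_x)] \leq e^\eps q^k(E) + \delta$, so in total $p^k(E) \leq e^{k\eps} q^k(E) + \bigl(e^{(k-1)\eps} + (k-1)\bigr)\delta$. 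The reverse inequality follows by swapping the roles of $p$ and $q$.

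The main obstacle is cosmetic rather than substantive: the induction naturally yields a $\delta$-coefficient of $e^{(k-1)\eps} + (k-1)$ rather than exactly $k$. In the parameter regime of this paper ($\eps$ a small constant and $k\eps = O(1)$) one has $e^{(k-1)\eps} = O(1)$, so the bound is within an absolute constant of $k\delta$ and the statement should be read in that sense; equivalently, one may tighten the induction to $\bigl(k\eps, \bigl(\sum_{i=0}^{k-1} e^{i\eps}\bigr)\delta\bigr)$-indistinguishability, which collapses exactly to $(k\eps, k\delta)$ in the $\eps \to 0$ limit.
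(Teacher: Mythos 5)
Your induction and your ``function-valued'' reformulation of $(\eps,\delta)$-indistinguishability are essentially the same engine as the paper's proof: the paper's remainder measure $\Delta(x)=\max\{p(x)-e^\eps q(x),0\}$, with $\Delta(X)\le\delta$ and $p(x)\le e^\eps q(x)+\Delta(x)$, is exactly what your layer-cake bound $\Ex_p[g]\le e^\eps\Ex_q[g]+\delta$ for $g:X\to[0,1]$ encodes. However, the gap you call ``cosmetic'' is a genuine error. You apply the $[0,1]$-bounded-function bound to $g(x)=q^{k-1}(E_x)$ and only afterwards multiply by $e^{(k-1)\eps}$, so the single-coordinate $\delta$ gets inflated to $e^{(k-1)\eps}\delta$. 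In this paper that matters: \Cref{lem:prod} is invoked in \Cref{lem:binary} with $k=D=10^2m^2\log T$ and $\eps=0.1$, so $k\eps=\Theta(m^2\log m)$ is unbounded, and your additive term $e^{(k-1)\eps}\delta$ is astronomically larger than the $k\delta\approx 0.1$ that the downstream calculation $\tfrac23 - D\delta\ge\tfrac12$ requires. Your proposed alternative of $\bigl(\sum_{i<k}e^{i\eps}\bigr)\delta$ has the same defect.

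The fix is the capping step you omitted, and it is precisely what the paper's $1\wedge\,\cdot$ does. Because $p^{k-1}(E_x)\le 1$ always, the inductive hypothesis can be strengthened to
\[p^{k-1}(E_x)\;\le\;\bigl(1\wedge e^{(k-1)\eps}q^{k-1}(E_x)\bigr) + (k-1)\delta,\]
and the function $g(x):=1\wedge e^{(k-1)\eps}q^{k-1}(E_x)$ lies in $[0,1]$. Now apply your layer-cake bound to this $g$ (not to $q^{k-1}(E_x)$) and drop the cap on the $q$-side:
\[p^k(E)\;\le\;(k-1)\delta + \Ex_{x\sim p}[g(x)]\;\le\;(k-1)\delta + e^{\eps}\Ex_{x\sim q}[g(x)]+\delta\;\le\;k\delta + e^{k\eps}q^k(E),\]
which is exactly $(k\eps,k\delta)$. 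This is the same calculation the paper performs with $\Delta_1$ multiplying a quantity that has been clipped to $[0,1]$; the clip is what keeps the error measures from being amplified by the exponential factor.
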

For completeness, a proof of this statement appears in \Cref{app:prod}.

\paragraph{Private Empirical  Learners}
It will be convenient to consider the following task of minimizing the empirical loss.
%

\begin{definition}[Empirical Learner]
Algorithm $A$ is $(\alpha,\beta)$-accurate empirical learner for a hypothesis class $\cH$ with sample complexity $m$ if for every $h\in \cH$ and for every sample $S=((x_1,h(x_1),\ldots, (x_m,h(x_m)))\in \left(X\times \{0,1\}\right)^m$ the algorithm $A$ outputs a function~$f$ satisfying
\[\Pr_{f\sim A(S)}\bigl(L_{S}(f) \le \alpha\bigr)\ge 1-\beta\]
\end{definition}

This task is simpler to handle than PAC learning, which is a distributional loss minimization task.
Replacing PAC learning by this task does not lose generality;
this is implied by the following result by \cite{Bun15thresholds}.

\begin{lemma}\label{lem:bun}[\cite{Bun15thresholds}, Lemma 5.9]
Suppose \new{$\eps<1$} and $A$ is an $(\epsilon,\delta)$-differentially private $(\alpha,\beta)$--accurate learning algorithm for a hypothesis class $\cH$ with sample complexity~$m$. 
Then there exists an $(\epsilon,\delta)$--differentially private $(\alpha,\beta)$--accurate empirical learner for $\cH$ with sample complexity $9m$.
\end{lemma}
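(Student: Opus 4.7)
The plan is to implement the natural subsampling reduction: on input $S'\in(X\times\{\pm 1\})^{9m}$ realizable by some $h\in\cH$, the candidate empirical learner $B$ would draw $T=(t_1,\ldots,t_m)$ i.i.d.\ from the empirical distribution $D_{S'}$ (the uniform distribution on the multiset $S'$) and output $A(T)$.

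First I would dispose of accuracy, which is essentially immediate. Since $S'$ is labelled by some $h\in\cH$, the empirical distribution $D_{S'}$ is realizable by $h$; applying the PAC guarantee of $A$ to the i.i.d.\ realizable sample $T\sim D_{S'}^m$ yields $\Pr[L_{D_{S'}}(f)>\alpha]\le\beta$, and because $L_{D_{S'}}(f)=L_{S'}(f)$ for the uniform distribution on $S'$, the required empirical bound follows.

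The main step is privacy, which I would obtain via the classical ``privacy amplification by sub-sampling'' idea: a single record of $S'$ enters $A$'s input only with probability $\approx 1/9$ per coordinate of $T$, so the end-to-end algorithm $B$ should be no less private than $A$. Concretely, I would take two neighbours $S',S''$ (differing only at index $i$) and couple the two sub-sampling processes by drawing shared i.i.d.\ uniform indices $J_1,\ldots,J_m\in[9m]$, setting $T_k=S'_{J_k}$ and $T''_k=S''_{J_k}$. Letting $K=|\{k:J_k=i\}|\sim\mathrm{Bin}(m,1/(9m))$, the samples $T$ and $T''$ differ in exactly $K$ coordinates, so conditional on $K=k$ the $k$-fold iteration of the $(\eps,\delta)$-DP of $A$ (\emph{group privacy}, itself a direct consequence of \Cref{lem:prod}) yields $(k\eps,\,k\,e^{(k-1)\eps}\delta)$-indistinguishability of $A(T)$ and $A(T'')$; averaging over $K$ with the binomial MGF
\[\mathbb{E}\bigl[e^{K\eps}\bigr]=\bigl(1+(e^\eps-1)/(9m)\bigr)^m\le\exp\bigl((e^\eps-1)/9\bigr),\]
together with the hypothesis $\eps<1$ (so that $(e^\eps-1)/9\le\eps$) and a parallel moment bound on the $\delta$-term, one concludes that $B(S')$ and $B(S'')$ are $(\eps,\delta)$-indistinguishable.

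The hard part is exactly this privacy analysis: the per-$K$ group-privacy bound has multiplicative factors $e^{K\eps}$ and additive factors $Ke^{(K-1)\eps}\delta$ that are unbounded in $K$, so a careful sub-sampling-amplification estimate is needed to ensure that the mixture over $K$ still sits inside the $(\eps,\delta)$-DP envelope. The constant $9$ in the sample-complexity blow-up is chosen precisely so that, under the standing assumption $\eps<1$, both $\mathbb{E}[e^{K\eps}]$ and the relevant moment controlling the $\delta$-term stay below $e^\eps$ and a small constant respectively; a significantly smaller factor than $9$ would leave these MGF-type quantities too large and yield only a weaker privacy guarantee.
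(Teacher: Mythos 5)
The paper itself offers no proof here: it defers to Lemma~5.9 of \cite{Bun15thresholds}, and your reduction (resample $m$ points i.i.d.\ with replacement from the $9m$-element input $S'$, then run $A$) is exactly that lemma's construction. Your accuracy argument is correct as stated: $T\sim D_{S'}^m$ is an i.i.d.\ sample from a realizable distribution, so the PAC guarantee of $A$ gives $L_{D_{S'}}(f)\le\alpha$ w.p.\ $\ge 1-\beta$, and $L_{D_{S'}}=L_{S'}$.

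The privacy step, however, has a real gap. After conditioning on $K=k$ you have $a_k\le e^{k\eps}b_k+ke^{(k-1)\eps}\delta$, where $a_k=\Pr[A(T)\in E\mid K=k]$ and $b_k=\Pr[A(T'')\in E\mid K=k]$, and you then want to ``average over $K$ with the MGF.'' But what you need for the multiplicative part is $\sum_k q_k e^{k\eps}b_k\le e^{\eps}\sum_k q_k b_k$, and the MGF bound $\mathbb{E}[e^{K\eps}]\le e^{\eps}$ controls only $\sum_k q_k e^{k\eps}$: it does not control the correlated sum, because nothing you have stated forbids the conditional probabilities $b_k$ from being concentrated on large $k$ (e.g.\ $b_0=b_1=0$ and $b_k$ close to $1$ for some $k\ge 2$), in which case the ratio $\sum q_k e^{k\eps}b_k/\sum q_k b_k$ can be as large as $e^{k\eps}$ rather than $e^{\eps}$. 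Equivalently, bounding $b_k\le 1$ leaves an additive error $\sum_{k\ge 2}q_k e^{k\eps}=\Theta(1)$, which is far larger than $\delta$. The missing ingredient is a common reference distribution: define the ``$i$-free'' resample $T^0$ that replaces every position with $J_\ell=i$ by a fresh uniform draw from $[9m]\setminus\{i\}$, note that $T^0$ is identically distributed under $S'$ and $S''$ and its law is independent of $K$, and that conditional on $K=k$ both $T$ and $T''$ are at Hamming distance $\le k$ from $T^0$. Then group privacy gives $a_k\le e^{k\eps}P^0+ke^{(k-1)\eps}\delta$ and $b_k\ge e^{-k\eps}P^0-ke^{-\eps}\delta$ with $P^0=\Pr[A(T^0)\in E]$ a constant, so the MGF bounds $\mathbb{E}[e^{\pm K\eps}]$ and $\mathbb{E}[Ke^{(K-1)\eps}]$ now factor cleanly and yield the $(\eps,\delta)$-DP guarantee (using $\eps<1$ to control the ratio $\mathbb{E}[e^{K\eps}]/\mathbb{E}[e^{-K\eps}]\le e^{\eps}$). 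Without this pivot to a $K$-independent reference, the averaging you describe does not close.
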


\subsection{Additional notations}
A sample $S$ of an even length is called \emph{balanced} if  half of its labels are $+1$'s and half are $-1$'s.

For a sample $S$, let $S_X$ denote the underlying
set of unlabeled examples: $S_X = \bigl\{x \vert (\exists y): (x,y)\in S\bigr\}$.
Let~$A$ be a randomized learning algorithm.
It will be convenient to associate with~$A$ and $S$ the function $A_S:X\to[0,1]$
defined by
\[ A_S(x) = \Pr_{h\sim A(S)}\bigl[h(x)=1\bigr].
\]
Intuitively, this function represents the average hypothesis outputted by $A$
when the input sample is $S$.

For the next definitions assume that the domain $X$ is linearly ordered.
Let $S=((x_i,y_i))_{i=1}^{m}$ be a sample. 
We say that $S$ is {\it increasing} if $x_1< x_2< \ldots< x_m$.
For $x\in X$ define $\ord_S(x)$ by $\lvert\{ i \vert x_i \leq x\}\rvert$.
Note that the set of points $x\in X$ with the same $\ord_S(x)$ form an interval
whose endpoints are two consecutive examples in $S$ (consecutive with respect to the order on $X$, 
i.e.\ there is no example $x_i$ between them).

The {\it tower function} $\twr_k(x)$ is defined by the recursion
\[
\twr^{(i)} x = 
\begin{cases}
x &i = 1,\\
2^{\twr{(i-1)}(x)} &i> 1. 
\end{cases}
\]
The iterated logarithm, $\log^{(k)}(x)$ is defined by the recursion
\[
\log^{(i)} x = 
\begin{cases}
\log x &i = 0,\\
1 + \log^{(i-1)}\log x &i> 0. 
\end{cases}
\]
The function $\log^*x$ equals the number of times the iterated logarithm must be applied before the result is less than or equal to $1$. 
It is defined by the recursion 
\[
\log^* x = 
\begin{cases}
0 &x\leq 1,\\
1 + \log^*\log x &x>1. 
\end{cases}
\]

\section{A lower bound for privately learning thresholds}\label{sec:thresholds}

In this section we prove \Cref{thm:main}.

\subsection{Proof overview}
We begin by considering an arbitrary differentially private algorithm $A$ that learns the class of thresholds over an ordered domain $X$ of size $n$.
Our goal is to show a lower bound of~$\Omega(\log^* n)$ on the sample complexity of $A$.
A central challenge in the proof follows because $A$ may be improper and output arbitrary hypotheses
(this is in contrast with proving impossibility results for proper algorithms where the structure of the learned class can be exploited).

The proof consists of two parts:
(i) the first part handles the above challenge by showing that for any algorithm
(in fact, for any mapping that takes input samples to output hypotheses)
there is a large subset of the domain that is {\it homogeneous with respect to the algorithm}.
This notion of homogeneity places useful restrictions on the algorithm
when restricting it to the homogeneous set. 
(ii) The second part of the argument utilizes such a large homogeneous set $X'\subseteq X$
to derive a lower bound on the sample complexity of the algorithm in terms of~$\lvert X' \rvert$.

\new{We note that the Ramsey argument in the first part is quite general:
it does not use the definition of differential privacy and could perhaps be useful
in other sample complexity lower bounds. 
Also, a similar argument was used by~\cite{bun16thesis}
in a weaker lower bound for privately learning thresholds in the proper case.
However, the second and more technical part of the proof is tailored specifically to the definition of differential privacy.}
We next outline each of these two parts.

\paragraph{Reduction to an algorithm over an homogeneous set}

As discussed above, the first step in the proof is about
identifying a large homogeneous subset of the input domain $X$ on which we can control the output of $A$:
a subset $X'\subseteq X$ is called {\it homogeneous with respect to~$A$} 
if there is a list of numbers $p_0,p_1,\ldots,p_m$ such that for every increasing balanced
sample~$S$ of points from $X'$ and for every $x'$ from $X'$ with $\ord_S(x') = i$:
\[\lvert A_S(x') - p_i\rvert \leq \gamma,\]
where $\gamma$ is sufficiently small.
For simplicity, in this proof-overview we will assume that~$\gamma=0$ 
(in the formal proof $\gamma$ is some $O(1/m)$ - see \Cref{def:homog}).
So, for example, if $A$ is deterministic then $h=A(S)$ is constant
over each of the intervals defined by consecutive examples from $S$.
See \Cref{fig:homogenic} for an illustration of homogeneity.

The derivation of a large homogeneous set follows by a standard application of Ramsey Theorem for hyper-graphs
using an appropriate coloring (\Cref{lem:ramsey}).


%

\begin{figure}
\includegraphics[scale=0.5]{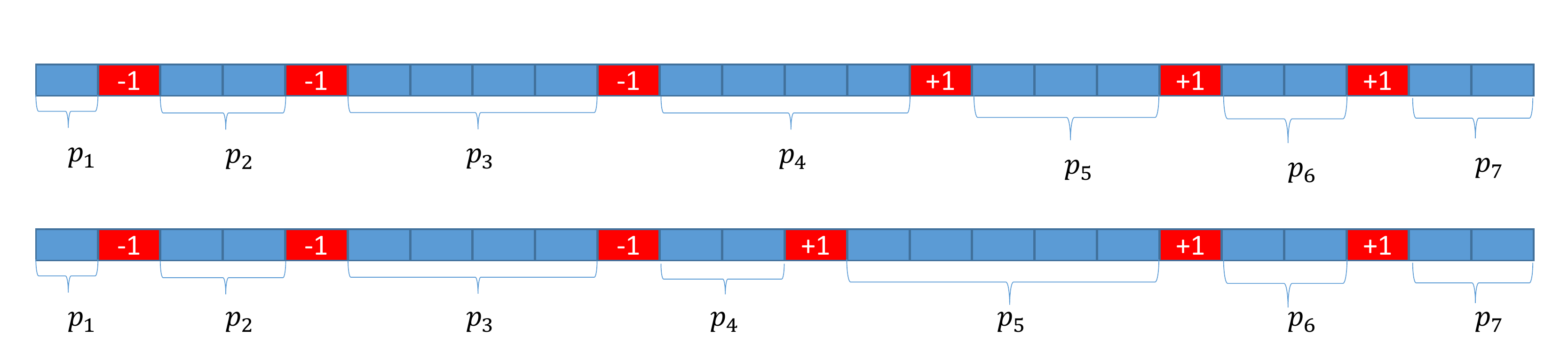}
\caption{\small{Depiction of two possible outputs of an algorithm over an homogeneous set, given two input samples from the set (marked in red). 
The number $p_i$ denote, for a given point $x$, the probability that $h(x)=1$, where $h\sim A(S)$ is the hypothesis $h$ outputted by the algorithm on input sample $S$. 
These probabilities depends (up to a small additive error) only on the interval that $x$ belongs to. 
In the figure above we changed in the input the fourth example -- this only affects the interval 
and not the values of the $p_i$'s (again, up to a small additive error).}}\label{fig:homogenic}
\end{figure}

\paragraph{Lower bound for an algorithm defined on large homogeneous sets}
We next assume that $X'=\{1,\ldots,k\}$ is a large homogeneous set with respect to $A$ (with $\gamma=0$).
We will obtain a lower bound on the sample complexity of $A$, denoted by $m$, 
by constructing a family $\P$ of distributions such that:
(i) on the one hand $\lvert \P \rvert \leq 2^{\tilde O(m^2)}$, 
and (ii) on the other hand $\lvert \P \rvert\geq \Omega( k)$.
Combining these inequalities yields a lower bound on $m$ and concludes the proof.

The construction of $\P$ proceeds as follows and is depicted in \Cref{fig:AtoP}:
let $S$ be an increasing balanced sample of points from $X'$.
Using the fact that $A$ learns thresholds it is shown
that for some $i_1<i_2$ we have that $p_{i_1}\leq 1/3$ and $p_{i_2} \geq 2/3$.
Thus, by a simple averaging argument
there is some $i_1\leq i \leq i_2$ such that $p_{i} - p_{i-1} \geq \Omega(1/m)$.

The last step in the construction is done by picking an increasing sample $S$ 
such that the interval $(x_{i-1}, x_{i+1})$ has size $n=\Omega(k)$.
For $x\in (x_{i-1}, x_{i+1})$, let $S_x$ denote the sample obtained by replacing $x_i$ with $x$ in $S$.
Each output distribution $A(S_x)$ can be seen as a distribution over the cube $\{\pm 1\}^n$
(by restricting the output hypothesis to the interval $(x_{i-1}, x_{i+1})$, which is of size $n$).
This is the family of distributions $\P=\{P_j : j\leq n\}$.
Since $A$ is private, and by choice of the interval $(x_i,x_{i+1})$ we obtain that $\P$ has the following two properties:
\begin{itemize}
\item $P_{j'}, P_{j''}$ are $(\eps,\delta)$-indistinguishable for all $j',j''$, and
\item  Put $r=\frac{p_{i-1} + p_{i}}{2}$, then for all $P_j$
\[(\forall x\leq n): \Pr_{v\sim P_j}\bigl[v(x)=1\bigr]
=
\begin{cases}
r-\Omega(1/m)   &x<j,\\
r+\Omega(1/m) &x>j.
\end{cases}
\]
\end{itemize}
It remains to show that $\Omega(k) \leq \lvert \P\rvert \leq 2^{\tilde O(m^2)}$.
The lower bound follows directly from the definition of $\P$. 
The upper bound requires a more subtle argument:
it exploits the assumption that $\delta$ is small and \Cref{lem:prod}
via a binary-search argument and concentration bounds.
This argument appears in \Cref{lem:binary}, whose proof is self-contained.

\begin{figure}[h]
\includegraphics[scale=0.45]{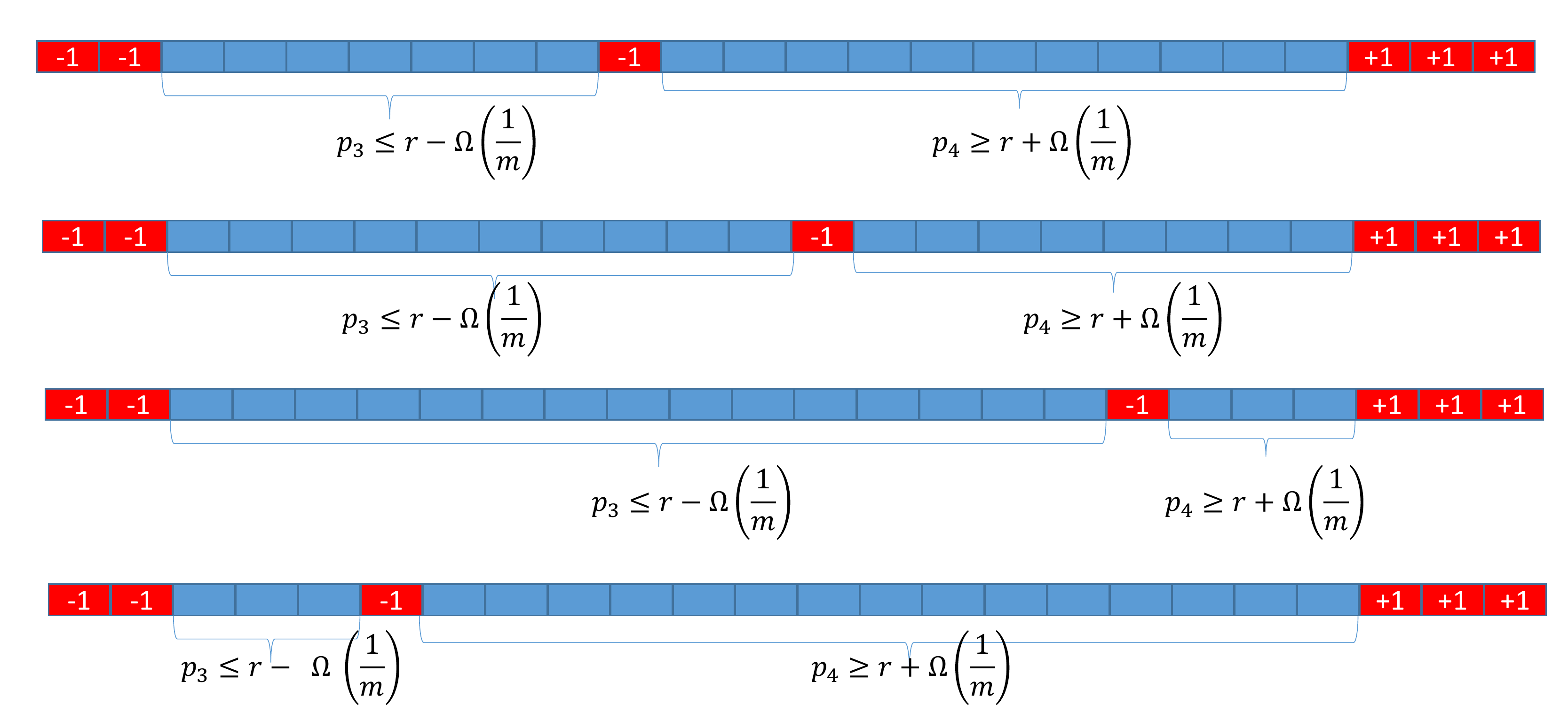}
\caption{\small{An illustration of the definition of the family $P$. Given an homogeneous set and two consecutive intervals where there is a gap of at least $\Omega(1/m)$ between $p_i$ and $p_{i-1}$ (here $i=4$). The distributions in $P$ correspond to the different positions of the $i$'th example, which separates between the $(i-1)$'th and the $i$'th intervals.}}\label{fig:AtoP}
\end{figure}


%

\subsection{Proof of \Cref{thm:main}}

The proof uses the following definition of homogeneous sets.
Recall the definitions of balanced sample and of an increasing sample.
In particular that a sample $S=((x_1,y_1),\ldots,(x_m,y_m))$ of an even size is realizable (by thresholds),
balanced, and increasing if and only if $x_1<x_2<\ldots<x_m$ and the first half of the $y_i$'s are $-1$
and the second half are $+1$.
\begin{definition}[$m$-homogeneous set]\label{def:homog}
A set $X'\subseteq X$ is {\it $m$-homogeneous} with respect to a learning algorithm $A$
if there are numbers $p_i\in [0,1]$, for $0\leq i\leq m$ 
such that for every increasing balanced realizable sample $S\in \bigl(X' \times \{\pm 1\}\bigr)^m$
and for every $x\in X'\setminus S_X$:
\[\bigl\lvert A_S(x) - p_i\bigr\rvert \leq \frac{1}{10^2 m},\]
where $i = \ord_S(x)$. 
The list $(p_i)_{i=0}^m$ is called the probabilities-list of $X'$ with respect to $A$.
\end{definition}

\begin{proof}[Proof of \Cref{thm:main}]
Let $A$ be a $(1/16,1/16)$-accurate learning algorithm that learns the class of thresholds over $X$ 
with $m$ examples and is $(\eps,\delta)$-differential private with $\eps=0.1,\delta = \frac{1}{10^3m^2\log m}$.
By \Cref{lem:bun} we may assume without loss of generality that $A$ is an empirical learner
with the same privacy and accuracy parameters and sample size that is at most 9 times larger.

\Cref{thm:main} follows from the following two lemmas:
\begin{lemma}[Every algorithm has large homogeneous sets]\label{lem:finiteramsey}\label{lem:ramsey}
Let $A$ be a (possibly randomized) algorithm that is defined over input samples of size $m$
over a domain $X\subseteq R$ with $\lvert X\rvert = n$.
Then, there is a set  $X'\subseteq X$ that is $m$-homogeneous with respect to~$A$ of size
\[ \lvert X'\rvert \geq \frac{\log^{(m)}(n)}{2^{O(m\log m)}}.\]
\end{lemma}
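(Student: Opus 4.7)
The plan is to apply the Erd\H{o}s--Rado upper bound for hypergraph Ramsey numbers to a coloring of $(m+1)$-element subsets of $X$ that captures the algorithm's behavior in every way such a subset can be split into a sample of size $m$ plus a single query point. The key observation that makes this natural is that an increasing, balanced, realizable sample is determined by its unlabeled support, so each $(m+1)$-subset of $X$ encodes $m+1$ different (sample, query) configurations, one per choice of which element plays the role of the query.

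To set up the coloring, first partition $[0,1]$ into $t = \lceil 200 m \rceil$ intervals of length at most $\frac{1}{100 m}$. For any $(m+1)$-subset $Z=\{z_0 < z_1 < \cdots < z_m\}$ of $X$ and every $i \in \{0,1,\ldots,m\}$, let $S^{(i)}$ denote the increasing balanced realizable sample on $Z \setminus \{z_i\}$ (the smallest $m/2$ points receive label $-1$ and the largest $m/2$ receive label $+1$), and let $b_i(Z) \in \{1,\ldots,t\}$ be the index of the bin containing $A_{S^{(i)}}(z_i)$. Color $Z$ by the tuple $\bigl(b_0(Z),\ldots,b_m(Z)\bigr)$; the total number of colors is $t^{m+1} = 2^{O(m \log m)}$.

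Next, invoke the Erd\H{o}s--Rado bound for $(m+1)$-uniform hypergraph Ramsey numbers: with $c$ colors and target size $s$, it suffices that $|X| \ge \twr_{m}\bigl(\mathrm{poly}(s,c)\bigr)$. Inverting with $c = 2^{O(m \log m)}$ and $|X|=n$ yields a monochromatic subset $X' \subseteq X$ of size
\[|X'| \ge \frac{\log^{(m)}(n)}{2^{O(m \log m)}}.\]
To verify that $X'$ is $m$-homogeneous, let $p_i$ be the midpoint of bin $b_i$ (the common value of $b_i(\cdot)$ on the monochromatic family). For any increasing balanced realizable sample $S$ on $X'$ and any $x \in X' \setminus S_X$ with $i=\ord_S(x)$, the $(m+1)$-set $S_X \cup \{x\}$ lies in $X'$ and inherits the common color, so $A_S(x)$ falls in bin $b_i$ and satisfies $\bigl|A_S(x)-p_i\bigr| \le \frac{1}{200 m} \le \frac{1}{100 m}$, as required by \Cref{def:homog}.

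The setup is essentially a direct application of Ramsey, so the main obstacle is purely quantitative: confirming that uniformity $m+1$ produces the $m$-fold iterated logarithm, and that the $2^{O(m \log m)}$ in the denominator comfortably absorbs both $\log c = O(m\log m)$ and any polynomial-in-$s$ overhead coming from the standard Ramsey recursion.
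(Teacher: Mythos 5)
Your proposal is correct and follows essentially the same route as the paper: color each $(m+1)$-subset by the tuple of discretized values of $A_{S^{-i}}(x_i)$ over the $m+1$ ways of designating one element as the query point, then apply the Erd\H{o}s--Rado tower bound for $(m+1)$-uniform hypergraph Ramsey numbers with $2^{O(m\log m)}$ colors. The only cosmetic difference is that you discretize via bins of width $\le \frac{1}{100m}$ and take midpoints, whereas the paper rounds to the nearest multiple of $\frac{1}{100m}$; both give the required $\frac{1}{100m}$ accuracy and the same color count.
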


\Cref{lem:ramsey} allows us to focus on a large homogeneous set with respect to $A$.
The next Lemma implies a lower bound in terms of the size of a homogeneous set.
For simplicity and without loss of generality assume that the homogeneous set
is $\{1,\ldots,k\}$.

\begin{lemma}[Large homogeneous sets imply lower bounds for private learning]\label{lem:lbhomog}
Let $A$ be an $(0.1,\delta)$-differentially private algorithm with sample complexity $m$ and $\delta \leq \frac{1}{10^3m^2\log m}$.
Let~$X=\{1,\ldots, k\}$ be $m$-homogeneous with respect to~$A$.
Then, if $A$ empirically learns the class of thresholds over $X$ with $(1/16,1/16)$-accuracy, then
\[k \leq 2^{O(m^2\log^2m)}\] 
(i.e.~$m \geq \Omega\Bigl(\frac{\sqrt{\log k}}{\log\log k}\Bigr)$).
\end{lemma}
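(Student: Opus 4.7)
The plan has two parts. First, I would use empirical accuracy of $A$ together with differential privacy to locate two indices at which the homogeneity probabilities $(p_i)_{i=0}^m$ exhibit an $\Omega(1/m)$ jump. Then I would exploit such a jump to embed many positions of a single sample point into a family of $(\eps,\delta)$-indistinguishable output distributions on $\{\pm 1\}^n$ whose cardinality is bounded by \Cref{lem:binary}.

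For the first part, fix any balanced increasing realizable sample $S=((x_1,-1),\ldots,(x_{m/2},-1),(x_{m/2+1},+1),\ldots,(x_m,+1))$ drawn from $X$. Since $A$ is a $(1/16,1/16)$-accurate empirical learner, $\Ex_{h\sim A(S)}[L_S(h)]\leq 1/8$, and averaging separately over the two halves of the sample yields indices $j_1\leq m/2$ and $j_2>m/2$ with $A_S(x_{j_1})\leq 1/4$ and $A_S(x_{j_2})\geq 3/4$. Since $x_{j_1}\in S_X$, homogeneity does not bound $A_S$ at $x_{j_1}$ directly; to remedy this I would pass to a sample $S'$ obtained by replacing $x_{j_1}$ by a slightly shifted point $x_{j_1}'\in(x_{j_1},x_{j_1+1})\cap X$, so that $S'$ is still balanced, increasing and realizable and $\ord_{S'}(x_{j_1})=j_1-1$. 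Homogeneity on $S'$ at $x_{j_1}$ gives $\lvert A_{S'}(x_{j_1})-p_{j_1-1}\rvert\leq 1/(100m)$, and since $S,S'$ differ in exactly one example, $(\eps,\delta)$-privacy yields $A_S(x_{j_1})=_{\eps,\delta}A_{S'}(x_{j_1})$, which transfers the bound up to a tiny $O(\eps)+O(\delta)$ slack and gives $p_{j_1-1}\leq 1/3$. The symmetric argument on the $+$-side gives $p_{j_2}\geq 2/3$; setting $i_1=j_1-1<i_2=j_2$, a telescoping/pigeonhole step produces an index $i\in\{i_1+1,\ldots,i_2\}$ with $p_i-p_{i-1}\geq 1/(3m)$.

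For the second part, I would choose a balanced increasing realizable sample $S$ on $X=\{1,\ldots,k\}$ with $x_1,\ldots,x_{i-1}$ packed near $1$ and $x_{i+1},\ldots,x_m$ packed near $k$, so that the open interval $(x_{i-1},x_{i+1})$ contains $n\geq k-m$ points of $X$. For every $t\in(x_{i-1},x_{i+1})\cap X$ let $S_t$ be the sample obtained from $S$ by replacing $x_i$ by $t$, retaining the label at position $i$. Then $S_t$ remains balanced, increasing and realizable, and $S_t,S_{t'}$ differ in a single example, so by privacy $A(S_t)$ and $A(S_{t'})$ are $(\eps,\delta)$-indistinguishable. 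Let $P_t$ be the distribution on $\{\pm 1\}^n$ obtained by restricting $h\sim A(S_t)$ to the $n$ points of $(x_{i-1},x_{i+1})\cap X$; the restrictions $P_t,P_{t'}$ remain $(\eps,\delta)$-indistinguishable, and homogeneity applied to $S_t$ ensures that for each $y\neq t$ in the interval the marginal $\Pr_{v\sim P_t}[v(y)=1]$ is within $1/(100m)$ of $p_{i-1}$ when $y<t$ and within $1/(100m)$ of $p_i$ when $y>t$. Hence each $P_t$ encodes $t$ uniquely via a jump of magnitude $\gamma=\Omega(1/m)$ located at $t$.

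Applying \Cref{lem:binary} to the family $\{P_t\}_{t\in(x_{i-1},x_{i+1})\cap X}$ then yields $n\leq 2^{O(m^2\log^2 m)}$, and combining this with $n\geq k-m$ gives $k\leq 2^{O(m^2\log^2 m)}$, as required. The main obstacle is precisely the step encapsulated by \Cref{lem:binary}: the na\"ive distinguisher that draws $N=O(m^2\log n)$ i.i.d.\ samples from each $P_t$ and identifies $t$ via coordinate-wise Hoeffding estimates fails, because the $N$-fold composition (\Cref{lem:prod}) inflates $\delta$ to $N\delta\gg 1$ and trivializes any packing bound derived from $(N\eps,N\delta)$-indistinguishability. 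The more delicate adaptive binary-search argument of \Cref{lem:binary} must exploit the hypothesis $\delta\leq 1/(10^3 m^2\log m)$ to keep the accumulated privacy loss bounded and extract the stated bound.
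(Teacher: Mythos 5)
Your proposal is correct and follows essentially the same route as the paper: first use empirical accuracy to find two sample points where $A_S$ is far from $1/2$ on opposite sides, shift those points slightly so homogeneity applies (and transfer via privacy), thereby pigeonhole an index $i$ with $p_i-p_{i-1}=\Omega(1/m)$; then pack the remaining sample points so that $(x_{i-1},x_{i+1})$ captures almost all of $X$, and sweep $x_i$ across that interval to produce the family $\{P_t\}$ of pairwise $(\eps,\delta)$-indistinguishable distributions over $\{\pm 1\}^n$ whose size is then bounded by \Cref{lem:binary}. This matches the paper's decomposition into \Cref{lem:reduction}, \Cref{lem:AtoP}, and \Cref{lem:binary}, and your closing remark correctly identifies why the binary-search argument in \Cref{lem:binary}, rather than naive coordinate-wise estimation, is needed to avoid the $\delta$-blow-up under composition.
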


We prove \Cref{lem:ramsey} and \Cref{lem:lbhomog} in the following two subsections.

With these lemmas in hand, \Cref{thm:main} follows by a short calculation:
indeed, \Cref{lem:ramsey} implies the existence of an homogeneous set $X'$ with respect to $A$
of size $k\geq {\log^{(m)}(n)}/{2^{O(m\log m)}}$. We then restrict $A$ to input samples from the set $X'$,  and by relabeling the elements of $X'$ assume that $X'=\{1,\ldots,k\}$ .
\Cref{lem:lbhomog} then implies that $k = 2^{O(m^2\log^2m)}$.
Together we obtain that 
\[ \log^{(m)}(n) \leq 2^{c\cdot m^2\log m} \]
for some constant $c > 0$.
Applying the iterated logarithm $t=\log^*(2^{c\cdot m^2\log m}) = \log^{*}(m)+O(1)$ 
times on the inequality yields that
\[\log^{(m+t)}(n)=\log^{(m + \log^*(m) + O(1))}(n) \leq 1,\]
and therefore $\log^*(n) \leq \log^*(m) + m +O(1)$,
which implies that $m \geq \Omega(\log^* n)$ as required.

\end{proof}
\subsection{Proof of \Cref{lem:ramsey}}

We next prove that every learning algorithm has a large homogeneous set. 
We will use the following quantitative version of Ramsey Theorem due to \cite{erdos52combinatorial} 
(see also the book \citep{graham90ramsey}, or Theorem 10.1 in the survey by~\cite{mubayi17survey}):
\begin{theorem}\label{thm:ramsey}\citep{erdos52combinatorial}
Let $s>t\geq 2$ and $q$ be integers, and let 
\[N\geq \twr_t(3sq\log q).\] 
Then for every coloring of the subsets of size $t$ of a universe of size $N$ using $q$ colors
there is a homogeneous subset\footnote{A subset of the universe is homogeneous if all of its $t$-subsets have the same color.} of size $s$.
\end{theorem}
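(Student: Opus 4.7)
The plan is to prove the theorem by induction on the uniformity $t$, using the classical Erdős--Rado vertex-peeling argument. The base case $t=2$ is a standard multicolor graph-Ramsey bound, and the inductive step reduces $t$-uniform hypergraph colorings to $(t-1)$-uniform ones by fixing one vertex at a time.

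Base case $t=2$: given a $q$-coloring of the edges of $K_N$, pick any $v_1$; by pigeonhole some color $c_1$ is used on at least $(N-1)/q$ edges incident to $v_1$, giving a monochromatic-at-$v_1$ neighborhood $U_1$. Iterate: pick $v_{i+1}\in U_i$, and obtain $U_{i+1}\subseteq U_i$ and color $c_{i+1}$ so that every edge from $v_{i+1}$ into $U_{i+1}$ has color $c_{i+1}$. After $R=q(s-1)+1$ steps, every edge $\{v_i,v_j\}$ with $i<j$ has color $c_i$, and by pigeonhole on $c_1,\dots,c_R$ some color appears at least $s$ times, yielding a monochromatic $K_s$. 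The iteration needs $N\geq q^R$, which is at most $\twr_2(3sq\log q)$.

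Inductive step for $t\geq 3$: assume $f(s',t-1,q)\leq \twr_{t-1}(3s'q\log q)$ for all $s'$. Given a $q$-coloring $\chi$ of the $t$-subsets of $[N]$, I build a decreasing chain $[N]=A_0\supseteq A_1\supseteq\cdots\supseteq A_R$ and distinguished vertices $v_i\in A_{i-1}\setminus A_i$ as follows. Having constructed $A_{i-1}$, pick any $v_i\in A_{i-1}$ and define a $q$-coloring of the $(t-1)$-subsets of $A_{i-1}\setminus\{v_i\}$ by $T\mapsto \chi(T\cup\{v_i\})$. By the inductive hypothesis, provided $|A_{i-1}|-1\geq \twr_{t-1}(3M_iq\log q)$, this coloring admits a homogeneous subset $A_i$ of size $M_i$; let $c_i$ be its common color. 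The crucial observation is that for any indices $i_1<\cdots<i_t$, the tuple $\{v_{i_2},\ldots,v_{i_t}\}$ lies in $A_{i_1}$, hence $\chi(\{v_{i_1},v_{i_2},\ldots,v_{i_t}\})=c_{i_1}$. Setting $R=q(s-1)+1$ and pigeonholing on $c_1,\dots,c_R$ produces a color $c^*$ appearing at least $s$ times; the corresponding $v_i$'s form a homogeneous set of size $s$.

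It remains to choose the $M_i$ large enough while keeping $|A_0|=N$ within the stated bound. Writing $M_R=1$ and $M_{i-1}=\twr_{t-1}(3M_iq\log q)+1$, after $R\leq qs$ iterations the initial size is at most $\twr_t(3sq\log q)$. The main obstacle here is the bookkeeping of the tower: one must verify that the recursion $M\mapsto \twr_{t-1}(3Mq\log q)$ collapses into a single $\twr_t$ after $qs$ iterations rather than producing an extra level. This follows from the telescoping identity $\twr_{t-1}(2^z)=\twr_t(z)$ (valid for $t\geq 3$) together with the absorption estimate $\alpha\cdot\twr_{t-1}(y)\leq \twr_{t-1}(y+\log\alpha)$ for $t\geq 3$ and $y$ large, which lets the $3q\log q$ factor be charged to the height of the tower rather than accumulated multiplicatively. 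This completes the induction and hence the proof.
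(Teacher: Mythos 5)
The paper does not prove this theorem; it is cited from Erd\H{o}s and Rado and used as a black box. So there is no in-paper proof to compare against, and I will assess your argument on its own terms.

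Your base case and the combinatorial core of the inductive step (peeling, the observation that $\chi(\{v_{i_1},\ldots,v_{i_t}\})=c_{i_1}$, the pigeonhole on $c_1,\ldots,c_R$) are fine. The gap is in the size bookkeeping, and it is not a technicality: the recursion you set up does not collapse to $\twr_t$. You invoke the $(t-1)$-uniform Ramsey bound at \emph{every} one of the $R\approx qs$ peel steps, so $M_{i-1}=\twr_{t-1}(3M_iq\log q)+1$. Starting from $M_R=1$ and composing this $R$ times yields $M_0\geq \twr_{t-1}\bigl(\twr_{t-1}(\cdots\twr_{t-1}(1)\cdots)\bigr)$, an $R$-fold composition of $\twr_{t-1}$, which is a tower of height $\Theta(R(t-1))=\Theta(qs(t-1))$, not $t$. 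Concretely, for $t=3$, $q=2$, $s=3$ your recursion gives $M_0$ roughly $\twr_5(393)$, vastly exceeding $\twr_3(18)$. The ``absorption estimate'' $\alpha\cdot\twr_{t-1}(y)\leq\twr_{t-1}(y+\log\alpha)$ cannot fix this: the factor $3q\log q$ sits \emph{inside} the tower argument, multiplying $M_i$, and each application of $\twr_{t-1}$ adds roughly $t-1$ exponentials to the height rather than contributing a benign additive term.

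The missing idea is that the Erd\H{o}s--Rado peeling must not invoke Ramsey at each step; it must only \emph{classify}. Having peeled $v_1,\ldots,v_i$, partition the remaining vertices into equivalence classes by type: $u\sim u'$ iff $\chi(T\cup\{u\})=\chi(T\cup\{u'\})$ for every $(t-1)$-subset $T\subseteq\{v_1,\ldots,v_i\}$. There are at most $q^{\binom{i}{t-1}}$ types, so the remaining set shrinks by only that (single-exponential, not tower) factor at step $i$; the total shrinkage over $R$ peels is $q^{\binom{R+1}{t}}$. Only after all $R$ peels does one define the induced $(t-1)$-coloring on $\{v_1,\ldots,v_R\}$ (well-defined by the type classification) and apply the $(t-1)$-uniform Ramsey theorem a single time, with $R$ on the order of $\twr_{t-1}(\cdot)$. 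This moves all the tower growth into a single invocation of the inductive hypothesis and yields a clean $\twr_{t-1}\to\twr_t$ step; your version spends a full $\twr_{t-1}$ at every peel and there is no way to telescope that down to one extra exponential.
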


\begin{proof}[Proof of \Cref{lem:ramsey}]
Define a coloring on the $(m+1)$-subsets of $X$ as follows.
Let $D=\{x_1<x_2<\ldots<x_{m+1}\}$ be an $(m+1)$-subset of~$X$.
For each $i\leq m+1$ let $D^{-i} = D\setminus\{x_i\}$, 
and let $S^{-i}$ denote the balanced increasing sample on $D^{-i}$. 
Set $p_{i}$ to be the fraction of the form~$\frac{t}{10^2m}$
that is closest to $A_{S^{-i}}(x_{i})$  (in case of ties pick the smallest such fraction).
The coloring assigned to $A$ is the list $(p_1,p_2,\ldots,p_{m+1})$.

Thus, the total number of colors is~$(10^2m+1)^{(m+1)}$.
By applying \Cref{thm:ramsey} with $t:=m+1, q:=(10^2m+1)^{(m+1)}$, and $N:=n$ there is a set $X'\subseteq X$ of size 
\[\lvert X'\rvert \geq \frac{\log^{(m)}(n)}{{3(10^2m+1)^{m+1} (m+1)\log(10^2m+1)}} =  \frac{\log^{(m)}(N)}{2^{O(m\log m)}}\]
such that all $m+1$-subsets of $X'$ have the same color.
One can verify that $X'$ is indeed $m$-homogeneous with respect to $A$.
\end{proof}

\subsection{Proof of \Cref{lem:lbhomog}}

The lower bound is proven by using the algorithm $A$
to construct a family of distributions $\P$ with certain properties,
and use these properties to derive that $\Omega(k) \leq \P \leq 2^{O(m^2\log^2 m)}$,
which implies the desired lower bound.
%

\begin{lemma}\label{lem:AtoP}
%
Let $A,X',m,k$ as in \Cref{lem:lbhomog}, and set $n=k-m$. 
Then there exists a family $\P=\{P_i : i\le n\}$ of distributions over $\{\pm 1\}^n$ with the following properties:
\begin{enumerate}
\item Every $P_i,P_j\in \P$ are $(0.1,\delta)$-indistinguishable.
\item There exists $r\in [0,1]$  such that for all $i,j\le n$: 
\[\Pr_{v\sim P_i}\bigl[v(j)=1\bigr]
=
\begin{cases}
\leq r-\frac{1}{10m} &j < i,\\
\geq r+\frac{1}{10m} &j>i.
\end{cases}\]
\end{enumerate}
\end{lemma}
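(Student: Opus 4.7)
The plan is to follow the outline given in the proof overview and organize the argument in three phases: first exhibit a nontrivial jump inside the homogeneity list $(p_0,\ldots,p_m)$; then select an explicit balanced sample $S$ whose $(i-1)$-th and $(i+1)$-th entries bracket a long unused interval inside $X=\{1,\ldots,k\}$; and finally define $\P$ by sliding the $i$-th entry through that interval and check Properties~1--2.

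\emph{Finding a jump in $(p_i)$.} The target is an index $i\in\{1,\ldots,m\}$ with $p_i-p_{i-1}\ge 1/(3m)$. I would fix any balanced, increasing, realizable sample $S=(z_1,\ldots,z_m)$ in $X$ whose consecutive entries leave room for a nearby swap (possible whenever $k$ is sufficiently larger than $m$, which is the only regime of interest). Empirical $(1/16,1/16)$-accuracy gives $\mathbb{E}[L_S(h)]\le 15/256+1/16<1/8$, so $\sum_{j=1}^{m/2} A_S(z_j)\le m/8$ and $\sum_{j=m/2+1}^{m}(1-A_S(z_j))\le m/8$. To turn these into bounds on the $p_j$'s I swap $z_j$ with a nearby $y\in X\setminus S_X$ to form a DP-neighbor $S^{(j\to y)}$; in $S^{(j\to y)}$ the point $z_j$ lies outside the sample with $\ord$ equal to $j-1$ or $j$ (depending on the sign of $y-z_j$), so $m$-homogeneity places $A_{S^{(j\to y)}}(z_j)$ within $\gamma=1/(100m)$ of $p_{j-1}$ or $p_j$, and $(0.1,\delta)$-DP then transports this to a two-sided bound on $A_S(z_j)$. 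Summing each bound over the appropriate half and dividing, the averages of $p_0,\ldots,p_{m/2-1}$ and of $p_{m/2+1},\ldots,p_m$ come out strictly below $1/3$ and strictly above $2/3$ respectively (using $e^{0.1}/4<1/3$ and $3e^{-0.1}/4>2/3$), so there exist $i_1\le m/2-1$ and $i_2\ge m/2+1$ with $p_{i_1}<1/3<2/3<p_{i_2}$. Writing $p_{i_2}-p_{i_1}\ge 1/3$ as a telescoping sum of at most $m$ increments, pigeonhole delivers the desired $i$; setting $r:=(p_{i-1}+p_i)/2$ gives $p_{i-1}\le r-1/(6m)$ and $p_i\ge r+1/(6m)$.

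\emph{Constructing and verifying $\P$.} Using $X=\{1,\ldots,k\}$, take the first $i-1$ sample entries to be $\{1,\ldots,i-1\}$, the last $m-i$ entries to be $\{k-m+i+1,\ldots,k\}$, and place $z_i$ anywhere in between. The open interval $(z_{i-1},z_{i+1})$ then meets $X$ in a set of size $k-m+1>n$, so I can pick distinct $u_1<\cdots<u_n$ inside it, all different from $z_i$. For each $\ell\in[n]$ let $S^{(\ell)}$ be $S$ with $z_i$ replaced by $u_\ell$ (keeping the label at position $i$); each $S^{(\ell)}$ is balanced, increasing, realizable, and any two of them differ in exactly one labeled example. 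Define $P_\ell$ by drawing $h\sim A(S^{(\ell)})$ and outputting $(h(u_1),\ldots,h(u_n))\in\{\pm 1\}^n$. Property~1 is then immediate from the $(0.1,\delta)$-differential privacy of $A$. For Property~2 with $j\ne\ell$, the point $u_j$ is outside $S^{(\ell)}$ and $\ord_{S^{(\ell)}}(u_j)$ equals $i-1$ if $j<\ell$ and $i$ if $j>\ell$; so $m$-homogeneity together with the Phase-1 bounds on $p_{i-1},p_i$ yields $\Pr_{v\sim P_\ell}[v(j)=1]\le p_{i-1}+\gamma\le r-1/(10m)$ in the first case and $\ge p_i-\gamma\ge r+1/(10m)$ in the second.

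\emph{Main obstacle.} The delicate step is the first one, since empirical accuracy constrains $A_S$ only on the sample points $z_j$ while homogeneity describes $A_S$ only off the sample. Bridging this gap by a one-element DP swap $S\leftrightarrow S^{(j\to y)}$ costs a multiplicative factor $e^{\pm 0.1}$, and the argument survives precisely because $\epsilon=0.1$ is small enough that $e^{0.1}/4<1/3$ and $3e^{-0.1}/4>2/3$ with margin; the $(1/16,1/16)$ accuracy parameters are likewise tuned so that the $1/8$ average-loss bound sits safely below these thresholds. Once the jump in $(p_i)$ is in hand, the construction of $\P$ and the verification of both properties amount to a direct unfolding of the definitions.
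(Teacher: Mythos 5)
Your proof is correct and follows essentially the same two-phase strategy as the paper's proof of \Cref{lem:AtoP} (via \Cref{lem:reduction}): use empirical accuracy on a balanced sample plus a one-element DP swap and homogeneity to find a jump of order $1/m$ in the probabilities-list, then slide the $i$-th example through a long gap to build $\P$. The only noticeable variant is in the first phase: the paper extracts a single index $m_1$ with $A_S(x_{m_1})\ge 3/4$ and performs one swap, whereas you transport all $m$ on-sample bounds to the off-sample $p_j$'s and then average; both routes give $p_{i_1}<1/3<2/3<p_{i_2}$ and then pigeonhole, so the difference is purely stylistic.
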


\begin{lemma}\label{lem:binary}
Let $\P,n,m,r$ as in \Cref{lem:AtoP}.
Then $n \leq 2^{10^3 m^2\log^2m}$.
\end{lemma}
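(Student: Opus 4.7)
The plan is to build a reconstruction algorithm $\mathcal{B}$ which, on input $k$ independent samples drawn from an unknown $P_{i^*}\in\P$, returns an index $\hat i\in\{i^*-1,i^*,i^*+1\}$ with probability at least $1/2$. Granting this, the composition bound (\Cref{lem:prod}) makes $P_{i^*}^k$ and $P_j^k$ into $(0.1k,\delta k)$-indistinguishable distributions, so for the event $E_{i^*}=\{\hat i\in\{i^*-1,i^*,i^*+1\}\}$ we obtain
\[\Pr_{v^k\sim P_j^k}[E_{i^*}]\;\ge\;e^{-0.1k}\bigl(\Pr_{v^k\sim P_{i^*}^k}[E_{i^*}]-\delta k\bigr)\;\ge\;e^{-0.1k}\bigl(\tfrac12-\delta k\bigr).\]
Restricting $i^*$ to the $\lceil n/3\rceil$ indices $\{1,4,7,\ldots\}$ makes the events $E_{i^*}$ pairwise disjoint as subsets of $\{\pm 1\}^{n\cdot k}$ (since the target triples are $\{0,1,2\},\{3,4,5\},\ldots$), so summing their probabilities under any fixed $P_j^k$ yields $\lceil n/3\rceil\cdot e^{-0.1k}(\tfrac12-\delta k)\le 1$; assuming $\delta k\le 1/4$ this forces $n=O(e^{0.1k})$.

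The design of $\mathcal{B}$ is a coordinate-wise binary search. For any coordinate $j$, the empirical frequency $\hat p_j=\frac1k\sum_{\ell=1}^k\mathbf{1}[v_\ell(j)=1]$ concentrates around $\Pr_{v\sim P_{i^*}}[v(j)=1]$, and by property (2) of \Cref{lem:AtoP} this mean lies strictly on the correct side of the threshold $r$ (with margin $1/(10m)$) whenever $j\neq i^*$. Hoeffding's inequality bounds $\Pr[\,|\hat p_j-\mathbb{E}[\hat p_j]|>1/(20m)\,]\le 2e^{-k/(200m^2)}$. The algorithm runs ordinary binary search on $[1,n]$, at each step querying the midpoint $j$ and branching right if $\hat p_j<r$ and left otherwise. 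This makes at most $\lceil\log_2 n\rceil$ queries, and a union bound shows that taking $k=\Theta(m^2\log\log n)$ makes the event ``every queried $\hat p_j$ is within $1/(20m)$ of its expectation'' hold with probability at least $1/2$.

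The one subtle point is what happens if the binary search ever queries the coordinate $j=i^*$, for which property (2) gives no control on $\mathbb{E}[\hat p_j]$. This can happen at most once, and after it every subsequent midpoint $j'$ lies strictly on one side of $i^*$; conditioned on the Hoeffding good event, the search then marches deterministically toward $i^*+1$ or $i^*-1$. Hence the output always lies in $\{i^*-1,i^*,i^*+1\}$, justifying the claim that defines $\mathcal{B}$. Combining $k=\Theta(m^2\log\log n)$ with $n=O(e^{0.1k})$ gives $\log n=O(m^2\log\log n)$, which unwinds to $\log n=O(m^2\log m)$ and is comfortably within the stated bound $n\le 2^{10^3 m^2\log^2 m}$; the hypothesis $\delta\le 1/(10^3 m^2\log m)$ is exactly what is needed to ensure $\delta k\le 1/4$ at this value of $k$. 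The main technical obstacle is this last ``binary search might land on $i^*\pm 1$'' analysis, which is the only reason for restricting to every third index and losing a factor of $3$; everything else reduces to routine applications of Hoeffding and the composition lemma.
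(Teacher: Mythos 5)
Your proof is correct and uses the same core machinery as the paper (binary-search reconstruction + the composition lemma + counting pairwise-disjoint events), but organizes the binary search differently, which is worth noting.

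The paper truncates the binary search at a fixed number of steps $T=\Theta(m^2\log^2 m)$ (chosen in terms of $m$ only), defines $B(\vv)$ to be the coordinate probed at step $T$, and lets the events $E_i$ range over the $2^T$ depth-$T$ nodes of the search tree. This sidesteps the ``what if the search queries $i^*$'' issue you flag: for each $i$ in the image of $B$, the root-to-$i$ path in the decision tree is a \emph{fixed} set of $T$ coordinates that does not contain $i$ at any internal node, so Chernoff plus a non-adaptive union bound over those $T$ coordinates gives $Q_i(E_i)\ge 2/3$ directly, and the disjointness of the $2^T$ events is immediate from $n>2^{T+1}$. Your version runs the search to termination, observes that hitting $i^*$ can happen at most once and afterwards the trajectory is forced into $i^*\pm 1$, and then restricts to every third index to regain disjointness; you thus use $\Theta(n)$ events rather than $2^T$, and set $k$ as a function of $n$ rather than $m$. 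Both routes give the bound (yours in fact gives the slightly sharper $n\le 2^{O(m^2\log m)}$).

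One step you should make explicit: the union bound over the $\lceil\log_2 n\rceil$ queried coordinates is slightly delicate because the queries are adaptive and $\hat p_{j_1},\ldots,\hat p_{j_t}$ are correlated. The clean way to say it is the one the paper effectively uses: define the (at most two) deterministic ``ideal'' trajectories obtained by always branching toward $i^*$ at coordinates $j\ne i^*$ and choosing either side at $j=i^*$; union-bound Hoeffding over the fixed set of at most $2\lceil\log_2 n\rceil$ coordinates appearing on these trajectories; and note that on this good event the realized trajectory coincides with one of the ideal ones by induction on the steps. As written, your sentence ``a union bound shows\ldots'' is correct in spirit but glosses over this point, which is precisely the adaptivity that the paper's truncated-search formulation avoids having to discuss.
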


By the above lemmas, $k-m  = \lvert \P\rvert \leq 2^{10^3m^2\log^2 m}$,
which implies that $k=2^{O(m^2\log^2  m)}$ as required.
Thus, it remains to prove these lemmas, which we do next.

\subsubsection{Proof of \Cref{lem:AtoP}}

For the proof of \cref{lem:AtoP} we will need the following claim:

\begin{claim}\label{lem:reduction} 
Let $(p_i)_{i=0}^m$ denote the probabilities-list of $X'$ with respect to $A$.
Then for some $0 < i \leq m$:
\[p_{i} - p_{i-1} \geq \frac{1}{4m}\] 
\end{claim}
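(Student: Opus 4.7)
My plan is to show that the sequence $p_0,p_1,\ldots,p_m$ must traverse a constant-sized gap (roughly from $\le 0.28$ to $\ge 0.67$) within at most $m$ consecutive indices, so that by averaging some consecutive gap is at least $1/(4m)$. The two ingredients are the empirical accuracy of $A$ (which forces the algorithm to separate the negative half from the positive half of a balanced increasing sample) and a privacy-plus-homogeneity \emph{swap} trick, which converts statements about $A_S$ at points \emph{inside} $S$ into statements about the $p_i$'s, which are defined only for points \emph{outside} the sample.

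Concretely, I would pick $m$ points $x_1<\ldots<x_m$ in $X'$ with ``wiggle room'': between any two consecutive ones, and also above $x_m$ and below $x_1$, there is at least one unused element of $X'$. Since $X'$ has size $\Omega(k)\gg m$ this is possible. Let $S$ be the balanced increasing realizable sample on $(x_1,\ldots,x_m)$. The $(1/16,1/16)$-accuracy of $A$ gives $\Ex_f[L_S(f)]\le 1/8$, which expands as
\[ \sum_{i\le m/2}A_S(x_i)\;+\;\sum_{i>m/2}\bigl(1-A_S(x_i)\bigr)\le m/8. \]
Since each term is nonnegative, pigeonhole yields some $i^-\le m/2$ with $A_S(x_{i^-})\le 1/4$ and some $i^+>m/2$ with $A_S(x_{i^+})\ge 3/4$.

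The core step is the swap. To upper-bound $p_{i^-}$, replace $(x_{i^-},-1)$ in $S$ by $(y^-,-1)$ with $y^-\in X'\setminus S_X$ lying in $(x_{i^--1},x_{i^-})$; the resulting sample $\tilde S^-$ is balanced, increasing, realizable, and differs from $S$ in exactly one example. By $(0.1,\delta)$-differential privacy, $A_{\tilde S^-}(x_{i^-})\le e^{0.1}A_S(x_{i^-})+\delta\le e^{0.1}/4+\delta$. Since $x_{i^-}\notin \tilde S^-_X$ and a direct count shows $\ord_{\tilde S^-}(x_{i^-})=i^-$, homogeneity yields $\lvert A_{\tilde S^-}(x_{i^-})-p_{i^-}\rvert\le 1/(100m)$, hence $p_{i^-}<0.28$ for $m$ and $\delta$ in the allowed range. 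The symmetric swap, replacing $(x_{i^+},+1)$ by $(y^+,+1)$ with $y^+\in (x_{i^+},x_{i^++1})$, forces $\ord=i^+-1$ and, via the lower-bound direction of privacy, gives $p_{i^+-1}>0.67$.

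Finally I would telescope: $p_{i^+-1}-p_{i^-}=\sum_{k=i^-+1}^{i^+-1}(p_k-p_{k-1})>0.39$. The index range is nonempty (otherwise $p_{i^+-1}=p_{i^-}$, contradicting the $0.28$ versus $0.67$ bounds) and contains at most $m-1$ terms, so some $p_k-p_{k-1}>0.39/m>1/(4m)$, as required. The main obstacle is bookkeeping rather than conceptual: one has to verify at each step that the swapped sample remains balanced/increasing/realizable, that the relevant $\ord$ values come out exactly as stated, and that the multiplicative $e^{\pm 0.1}$ privacy losses together with the $O(1/m+\delta)$ homogeneity slack still leave a clear gap between the two thresholds $1/4$ and $3/4$.
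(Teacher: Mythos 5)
Your proposal is correct and takes essentially the same approach as the paper: both use the accuracy bound to locate a low $A_S(x_{i^-})$ and a high $A_S(x_{i^+})$, then apply the privacy-plus-homogeneity swap to transfer these to bounds on the $p_i$'s, and finally telescope to find a consecutive gap of size $\Omega(1/m)$. The only cosmetic differences are that you nudge $x_{i^-}$ to the left while the paper nudges both points to the right, and that you spell out both swaps explicitly where the paper handles the second one with a brief ``similarly.''
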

%
\begin{proof}
The proof of this claim uses the assumption that $A$ empirically learns thresholds. 
Let $S$ be a balanced increasing realizable sample such that $S_X=\{x_1<\ldots <x_m\}\subseteq X'$ are evenly spaced points on $K$
(so, $S=(x_i,y_i)_{i=1}^m$, where $y_i = -1$ for $i\leq m/2$ and $y_i=+1$ for $i> m_2$).

$A$ is an $(\alpha=1/16,\beta=1/16)$-empirical learner and therefore its expected empirical loss on~$S$ 
is at most $(1-\beta)\cdot\alpha + \beta\cdot 1\leq \alpha + \beta = 1/8$, and so:
\begin{align*}
\frac{7}{8}& \le \mathop\mathbb{E}_{h\sim A(S)} (1-L_S(h))\\&= 
\frac{1}{m}\sum_{i=1}^{m/2} \left[1-A_S(x_i) \right]+\frac{1}{m}\sum_{i=m/2+1}^m \left[ A_S(x_i) \right]. \tag{since $S$ is balanced}
\end{align*}
This implies that there is $m/2\le m_1\le m$ such that $A_S(x_{m_1})\ge 3/4$.
Next, by privacy if we consider $S'$ the sample where we replace $x_{m_1}$ by $x_{m_1}+1$ (with the same label), we have that \[A_{S'}(x_{m_1}) \ge \Bigl(\frac{3}{4} -\delta\Bigr) e^{-0.1} \ge \frac{2}{3}.\]
Note that $\ord_{S'}(x_{m_1}) = m_1-1$, hence by homogeneity: $p_{m_1-1}\ge\frac{2}{3}- \frac{1}{10^2m}$.
Similarly we can show that for some $1 \le m_2\le \frac{m}{2}$ we have $p_{m_2-1}\le\frac{1}{3} + \frac{1}{10^2m}$.
This implies that for some $m_2 - 1 \leq i \leq m_1 - 1$:
\[p_{i} - p_{i-1} \geq \frac{1/3}{m} - \frac{1}{50m^2} \geq \frac{1}{4m}, \] 
as required.
\end{proof}

\begin{proof}[Proof of \Cref{lem:AtoP}]
Let $i$ be the index guaranteed by \Cref{lem:reduction} such that $p_{i} - p_{i-1}\geq 1/4m$.
Pick an increasing realizable sample $S\in \bigl(X'\times\{\pm 1\}\bigr)^m$ so that the interval $J\subseteq X'$ between $x_{i-1}$ and $x_{i+1}$,
\[J = \bigl\{x\in \{1,\ldots, k\} : x_{i-1} < x < x_{i+1}\bigr\},\] 
is of size $k-m$.
For every $x\in J$ let $S_x$ be the neighboring sample of $S$ 
that is obtained by replacing $x$ with $x_i$.
This yields family of neighboring samples $\bigl\{S_x: x\in (x_{i-1},x_{i+1})\bigr\}$ such that
\begin{itemize}
\item every two output-distributions $A(S_{x'})$, $A(S_{x''})$ are $(\eps,\delta)$-indistinguishable (because $A$ satisfies $(\eps,\delta)$ differential privacy).
\item Set $r = \frac{p_{i+1} + p_i}{2}$. Then for all $x,x'\in J$: 
\[\Pr_{h\sim A(S_x) }\bigl[h(x')=1\bigr] =
\begin{cases}
\leq r-\frac{1}{10m} & x' < x,\\
\geq r+\frac{1}{10m} & x' >x.
\end{cases}\]
The proof is concluded by restricting the output of $A$ to $J$, 
and identifying $J$ with $[n]$ 
and each output-distributions $A(S_x)$ with a distribution over $\{\pm 1 \}^n$. 
\end{itemize}
\end{proof}

\subsubsection{Proof of \Cref{lem:binary}}

\begin{proof}

Set $T=10^3 m^2\log^2 m - 1$, and $D = 10^2m^2\log T$. 
We want to show that $n\leq 2^{T+1}$.
Assume towards contradiction that~$n > 2^{T+1}$.
Consider the family of distributions $Q_i=P_i^D$ for $i=1,\ldots,n$.
By Lemma~\ref{lem:prod}, each $Q_i,Q_j$ are $(0.1D,\delta D)$-indistinguishable.

We next define a set of mutually disjoint events $E_i$ for $i\leq 2^T$ that are measurable with respect to each of the~$Q_i$'s.
For a sequence  of vectors $\vv=(v_1,\ldots,v_D)$ in $\{\pm 1\}^n$ we let $\bar{\vv}\in \{\pm 1\}^n$ be the threshold vector defined by
\[\bar{\vv}(j) = \begin{cases}
-1 & \frac{1}{D}\sum_{i=1}^D  v_i(j) \le r, \\
+1 & \frac{1}{D}\sum_{i=1}^D  v_i(j) \ge r.
\end{cases} \]

Given a point in the support of any of the $Q_i$'s, 
namely a sequence $\vv = (v_1,\ldots, v_{D})$ of $D$ vectors in $\{\pm 1\}^n$ define a mapping $B$
according to the outcome of $T$ steps of binary search on $\bar{\vv}$ as follows: 
probe the $\frac{n}{2}$'th entry of $\bar{\vv}$; if it is $+1$ then continue recursively with the first half of $\bar{\vv}$. 
Else, continue recursively with the second half of $\bar{\vv}$. 
Define the mapping $B=B(\vv)$ to be the entry that was probed at the $T$'th step.
The events $E_j$ correspond to the $2^T$ different outcomes of $B$.
These events are mutually disjoint by the assumption that $n > 2^{T+1}$. 

Notice that for any possible $i$ in the image of $B$, 
applying the binary search on a sufficiently large i.i.d sample $\vv$ from $P_i$
would yield $B(\vv) = i$ with high probability.
Quantitatively, a standard application of Chernoff inequality and a union bound
imply that the event $E_i= \{\vv : B(\bar{\vv})=i\}$ for $\vv \sim Q_i$,  has probability at least 
\[1 - T\exp\Bigl(-2\frac{1}{10^2m^2}D\Bigr) = 1-T\exp(-2\log T) \geq \frac{2}{3}.\]
We claim that for all $j\leq n$, and $i$ in the image of $B$:
\begin{equation}\label{eq:crucial}
Q_j(E_i) \geq \frac{1}{2}\exp(-0.1D).
\end{equation}
This will finish the proof since the $2^T$ events are mutually disjoint, and therefore
\begin{align*}
1 &\geq Q_j(\cup_i E_i)\\
   &= \sum_i Q_j(E_i) \\
   &\geq 2^T \cdot \frac{1}{2}e^{-0.1 D}\\
& = 2^{T-1} e^{-0.1 D},
\end{align*}
however, $2^{T-1} e^{-0.1 D} > 1$ by the choice of $T,D$, which is a contradiction.

Thus it remains to prove \Cref{eq:crucial}. 
This follows since $Q_i,Q_j$ are $(0.1D,D\delta)$-indistinguishable:
\[\frac{2}{3} \leq Q_i(E_i) \leq \exp(0.1 D) Q_j(E_i) + D\delta,\]
and by the choice of $\delta$, which implies that $\frac{2}{3}-D\delta \geq \frac{1}{2}$.
\end{proof}

\section{Privately learnable classes have finite Littlestone dimension}\label{sec:lsdim}
We conclude the paper by deriving \Cref{thm:ADPimpliesLD} that gives a lower bound
of $\Omega(\log^* d)$ on the sample complexity of privately learning a class with Littlestone dimension $d$.
\begin{proof}[Proof of \Cref{thm:ADPimpliesLD}]
The proof is a direct corollary of \Cref{thm:shelah} and \Cref{thm:main}.
Indeed, let $H$ be a class with Littlestone dimension $d$, and let $c= \lfloor \log d\rfloor$.
By Item 1 of \Cref{thm:shelah}, there are $x_1,\ldots, x_c$ and $h_1,\ldots, h_c\in H$
such that $h_i(x_j) = +1$ if and only if $j\geq i$.
\Cref{thm:main} implies a lower bound of $m\geq \Omega(\log^* c) = \Omega(\log^* d)$
for any algorithm that learns $\{h_i : i\leq c\}$ with accuracy $(1/16,1/16)$
and privacy $(0.1,O(1/m^2\log m))$.
\end{proof}

\section{Conclusion}


The main result of this paper is a lower bound on the sample complexity of private learning
in terms of the Littlestone dimension.

We conclude with an open problem. 
There are many mathematically interesting classes with 
finite Littlestone dimension, see e.g.\ \citep{chase2018model}. It is natural to ask whether 
the converse to our main result holds, i.e.\ whether every class with finite Littlestone dimension may be 
learned privately.


\section*{Acknowledgements}
We thank Raef Bassily and Mark Bun for many insightful discussions
and suggestions regarding an earlier draft of this manuscript.
We also thank the anonymous reviewers 
for helping improving the presentation of this paper.

\ignore{
\newpage

For simplicity of the exposition here we outline the proof that the class of thresholds over $\mathbb{N}$ is not privately learnable,. We prove by way of contradiction, and we let $A$ be an $(\eps,\delta)$-private.
\ignore{
We first show that the existence of an algorithm $A$ implies the existence of a family $P=\{p_i : i\leq n\}$, where each~$p_i$ is a distribution over
the cube $\{\pm 1\}^n$ for arbitrarily large $n$ with the following properties: (i) $p_i, p_j$ are $(\eps,\delta)$-indistinguishable for all $i,j$,
and (ii) there are $r,\eta>0$ such that 
\[\Pr_{v\sim p_i}\bigl[v(j)=1\bigr]
=
\begin{cases}
\leq r-\eta   &j<i,\\
\geq r+\eta &j>i.
\end{cases}
\]
Then, it is shown that if $n$ is sufficiently large (at least some $2^{\tilde\Omega(1/\eta^2)}$) and the privacy parameter $\delta$ is sufficiently small, 
then such a family does not exist.
This is shown using a binary search argument combined with \Cref{lem:prod}.}
The proof rely on few basic steps which we next outline:
\paragraph{Reduction to an Algorithm over an homogeneous set}

The main challenge we have to deal with is that the algorithm $A$ is not necessarily proper;
in particular, the output of the algorithm may be arbitrary and we can only rely on the assumption that $A$ learns thresholds. We thus rely on Ramsey theory to infer certain structural results that any learning algorithm must satisfy (in fact, any mapping that takes samples to hypotheses), and we derive an infinite subset of examples $K\subseteq \N$, on which the algorithms behavior is \emph{homogeneous}.

Roughly speaking, a subset $K$ is said to be homogeneous if for a realizable input sample $S$ that has only examples from $K$ then for every $x\in K$, $A_S(x)$  depends only on the number of examples $x_i \in S_X$ such that $x_i < x$. In particular, $A_S(x)$ is (approximately) constant within each interval between any two consecutive examples (see \cref{fig:homogenic}).

Thus, our proof begins with the assumption that $A$ is defined on a large homogeneous set $K$, and we derive a contradiction. We then end up by showing (through a Ramsey argument) that for any algorithm there exists a large homogeneous set.

\paragraph{Lower Bound for an Algorithm defined on large homogeneous sets}

Let $A$ be an algorithm whose input come from a large set $K$ that is homogeneous. Let $S$ be a balanced fixed sample (recall that a sample is said to be balanced if it contains exactly half number of positive points as negative points). 

We denote by $J_i$ the interval $(x_i,x_{i+1})$ for any two consecutive examples in $S$. The crucial property of homogeneous sets that we utilize is that the probabilities $A_{S}(x)$ depend approximately only on the interval $J_i$ for which $x$ belongs to.

We can show (exploiting the fact that $A$ learns thresholds) that for some interval $J$ we have that $A_{S}(x)>2/3$ for all $x\in J$, and similarly for some interval $A_{S}(x)<1/3$: Since the marginal probabilities depend only on the interval and there are only $m$ interval we deduce that there are two consecutive intervalse $J_i,J_{i+1}$ such that the value of $A_S$ on $J_{i+1}$ is at least $\eta=\Omega(1/m)$ larger than the value of $A_S$
on $J_i$. 

We thus fix a sample, and replace only the sample $x_{i+1}$ which is intermediate to the intervals $J_i,J_{i+1}$.  Note that changing $x_{i+1}$ does not change the probabilities $A$ assign within two intervals (because of homogeneity): Thus, by changing the position of the intermediate point $x_{i+1}$ we construct a family of distributions $P=\{p_i : i\leq n\}$ where each~$p_i$ is a distribution over the cube $\{\pm 1\}^n$ $n$ corresponds to the size of the interval $(x_{i},x_{i+2})$ with the following properties: (i) $p_i, p_j$ are $(\eps,\delta)$-indistinguishable for all $i,j$,
and (ii) there are $r,\eta>0$ such that 
\[\Pr_{v\sim p_i}\bigl[v(j)=1\bigr]
=
\begin{cases}
\leq r-\eta   &j<i,\\
\geq r+\eta &j>i.
\end{cases}
\]
We depict the construction of the family $P$ in \cref{fig:AtoP}
\begin{figure}[h]
\includegraphics[scale=0.45]{AtoP.pdf}
\caption{\small{Depiction of the construction of the family $P$. Given an homogeneous set and two consecutive intervals where there is a major leap in the distribution of a single entry to be $1$: We obtain different probabilities over $\{\pm 1\}^n$ by assigning a new position to the intermediate example, where all entries before the example have probability less than $r-O(\frac{1}{m})$ to be $1$ and all entries after the example have probability at least $r+O(\frac{1}{m})$ to be one. Our next step is then, to show that such a family of distributions cannot be too large and yet indistinguishable.}}\label{fig:AtoP}
\end{figure}

Finally, it is shown that if $n$ is sufficiently large (at least some $2^{\tilde\Omega(1/\eta^2)}$) and the privacy parameter $\delta$ is sufficiently small, 
then such a family does not exist.
This is shown using a binary search argument combined with \Cref{lem:prod}.

Following the outline above, we begin with a formal definition of an homogeneous set

\begin{definition}[$m$-homogeneous set]
Let $X$ be a linearly ordered set.
$K\subseteq X$ is {\it $m$-homogeneous} with respect to $A$
if for every pair of $m$-balanced samples $S',S''$ such that $S'_X,S''_X$ are $m$-subsets of $K$ (namely all the examples in each of $S',S''$ are distinct) and for every $x'\in K\setminus S'_X, x''\in K\setminus S''_X$ such that~$\ord_{S'}(x')=\ord_{S''}(x'')$:
\[\bigl\lvert  A_{S'}(x') - A_{S''}(x'')\bigr\rvert\leq \frac{1}{10^2m}.\]
\end{definition}

The rest of this section is organized as follows: in \cref{sec:AtoP}, we show how given a private empirical learning algorithm defined over a large homogeneous set we can construct a family of distributions $P$ as depicted above. We then proceed in \cref{sec:binary} to prove that such a family of distributions $P$ cannot be too large. As a corollary we obtain \cref{cor:homogenic} that states that there are no large homogeneous sets for any empirical learning private algorithm.
 
Finally we apply Ramsey theory in \cref{sec:ramsey} to conclude and demonstrate that for any algorithm there are large homogeneous sets.

\subsection{Constructing distribution family $P$ from algorithm $A$}\label{sec:AtoP}
The following section is devoted to prove the following Lemma:

\ignore{
Assume that $A$ learns thresholds over $\N$ with expected loss $\Ex_{S\sim D^m}\bigl[L_D(A(S))\bigr]\leq \frac{1}{4}$ 
for every realizable $D$. 
We will show that $A$ can not be an $(\eps,\delta)$-private algorithm with $\eps=1$ and~$\delta = O(\frac{1}{m^2\log m})$.
It will be convenient to assume that $A$ is order-oblivious in the sense that if $S',S''$ are two samples
that are permutations of one another then $A(S')$ is the same distribution like $A(S'')$.
Note that this assumption does not lose generality because if $A$ is not order-oblivious
then define $A'$ such that on input sample $S$, it applies $A$ on a uniform random permutation of $S$. 
One can verify that $A'$ has the same learning and privacy guarantees like~$A$.
}

\begin{lemma}\label{lem:AtoP}
Let $\delta<0.001$ and let $A$ be an $(0.1,\delta)$--private empirical learner with accuracy $(1-\sqrt{\frac{7}{8}},\sqrt{\frac{7}{8}})$ and sample complexity $m$ over $K=\{1,2,\ldots,N\}\subseteq \mathbb{N}$ with respect to the class $\cH$ of thresholds.

Assume that $K$ is $m$-homogeneous with respect to $A$, and set $n=N-m$. Then there exists a family $P=\{p_i, i\le n\}$ of distributions over $\{\pm 1\}^n$ with the following properties:
\begin{enumerate}
\item Every $p_i,p_j\in P$ are $(0.1,\delta)$-indistinguishable.
\item There exists $r\in [0,1]$  such that for all $i,j\le n$: 
\[\Pr_{v\sim p_i}\bigl[v(j)=1\bigr]
=
\begin{cases}
\leq r-\frac{1}{10^2m} &j < i,\\
\geq r+\frac{1}{10^2m} &j>i.
\end{cases}\]
\end{enumerate}

\end{lemma}
For the proof of \cref{lem:AtoP} we will need the following claim:

\begin{claim}\label{lem:reduction} Under the assumptions of \cref{lem:AtoP},
there exists a realizable balanced $m$-sample $S\in K\times\{\pm 1\}^m$ with $\lvert S_X\rvert = m$
and $i \leq m$ such that 
\[A_S(x_{i+1}) - A_S(x_i) \geq \frac{1}{4m}\] 
for all $x_i,x_{i+1}\in K\setminus S_X$ such that $\ord_S(x_i)=i$ and $\ord_S(x_{i+1})=i+1$.
\end{claim}
\begin{proof}
Let $S$ be a balanced sample such that $S_X=\{x_1<\ldots <x_m\}\subseteq K$ are evenly spaced points on $K$, such that $y_i=-1$ iff $i\le \frac{m}{2}$.

Note that since $A$ is an $(1-\sqrt{7/8},\sqrt{7/8})$- empirical learner, taking expectation over the algorithms random bits we obtain that 
\begin{align*}
\frac{7}{8}& \le \mathop\mathbb{E}_{h\sim A(S)} (1-L_S(h))\\&= 
\frac{1}{m}\sum_{i=1}^{m/2} \left[1-A_S(x_i) \right]+\frac{1}{m}\sum_{i=m/2+1}^m \left[ A_S(x_i) \right].
\end{align*}
In particular, there is $m/2\le m_1\le m$ such that $A_S(x_{m_1})\ge 3/4$.

Next, by privacy if we consider $S'$ the sample where we replace $x_{m_1}$ by $x_{m_1}+1$ (with the same label), we have that \[A_{S'}(x_{m_1}) \ge (\frac{3}{4} -\delta) e^{-0.1} \ge \frac{2}{3}.\]

Note that $\ord_{S'}(x_{m_1}) = m_1-1$, hence by homogeneity we have that for any point $x$ such that $\ord_{S}(x)=m_1-1$ we have that \[A_{S}(x)\ge\frac{2}{3}- \frac{1}{10^2m}.\]

Similarly we can show that for some $1 \le m_2\le \frac{m}{2}$ we have that if $\ord_{S}(x)=m_2-1$ then $A_{S}(x)\le \frac{1}{3}+ \frac{1}{10^2m}$.

Now, $S$ partitions $K\setminus S_X$ to (at most) $m+1$ nonempty intervals.
By homogenity of $K$, on each of these intervals $A$ is approximately constant (up to an additive error of $\frac{1}{10^2m}$).
Therefore, there are two consecutive intervals $J_i,J_{i+1}$ such that 
\[A_S(x_{i+1}) - A_S(x_i) \geq \frac{1/3}{m} - \frac{1}{50m} \geq \frac{1}{4m} \] 
for any $x_i\in J_i, x_{i+1}\in J_{i+1}$. This finishes the proof.
\end{proof}

\paragraph{Proof of \cref{lem:AtoP}}
Let $S=\bigl((x_1,y_1),\ldots(x_m,y_m)\bigr)$ be the realizable $m$-sample whose existence is guaranteed by \Cref{lem:reduction}.
Note that by homogenity of $K$, any balanced $m$-sample $S'$ satisfies
\[A_{S'}(x_{i+1}) - A_{S'}(x_i) \geq \frac{1}{4m}-\frac{1}{50m}\geq\frac{1}{5m}\] 
for all $x_i,x_{i+1}\in K\setminus S_X$ such that $\ord_S(x_i)=i$ and $\ord_S(x_{i+1})=i+1$.
Now, the crucial point is we can pick $S'$ so that the interval in $K$ between $x_{i-1}$ and $x_{i+1}$,
\[(x_{i-1},x_{i+1})_K = \{x\in K : x_{i-1} < x < x_{i+1}\},\] 
is of size $\left(N-m\right)$.
Now, fix all examples $(x_j,y_j)$ for $j\neq i$, and set  for every $x_{i-1}\le x \le x_{i+1}$ a sample $S_x$ that is a balanced sample that contains the fixed set of examples and also $x_i=x$.

we obtain a family of neighboring samples $\bigl\{S_x: x\in (x_{i-1},x_{i+1})\bigr\}$ such that
\begin{itemize}
\item every two distributions $A(S_{x'})$, $A(S_{x''})$ are $(\eps,\delta)$-indistinguishable (because $A$ satisfies $(\eps,\delta)$ differential privacy).
\item There is $r\in [0,1]$ such that for all $x,x'\in (x_{i-1},x_{i+1})_k$: 
\[\Pr_{h\sim A(S_x) }\bigl[h(x')=1\bigr] =
\begin{cases}
\leq r-\frac{1}{10m} & x' < x,\\
\geq r+\frac{1}{10m} & x' >x.
\end{cases}\]
The result follows by identifying each $A(S_x)$ with a distribution $p$ over $\{\pm \}^n$, by restricting the output of the algorithm to the interval $(x_{i-1},x_{i+1})$.
\end{itemize}

\subsection{Binary search}\label{sec:binary}
In the last section we've constructed a large family of distributions over interval that are indistinguishable from a homogenice set. In this section we prove that such a family cannot be too large, concluding that there are no big homogenic sets with respect to private algorithms. Thus, we set out to prove the following statement:
\begin{lemma}\label{lem:binary}
Let $n > 2^{10^3 m^2\log^2m}$, and let $r\in [0,1]$ be a constant. 
Then there exists no family of distributions $\{p_i : i\leq n\}$ over $\{\pm 1\}^n$ such that
\begin{enumerate}
\item $p_i,p_j$ are $(0.1, \frac{1}{10^3m^2\log m})$-indistinguishable, and
\item  for all $i,j$: 
\[\Pr_{v\sim p_i}\bigl[v(j)=1\bigr]
=
\begin{cases}
\leq r-\frac{1}{10m} &j < i,\\
\geq r+\frac{1}{10m} &j>i.
\end{cases}
\]
\end{enumerate}
\end{lemma}
\begin{proof}

Set $T=10^3 m^2\log^2 m$, and set $D = 10^2m^2\log T$. Consider the family of distributions $q_i=p_i^D$ for $i=1,\ldots,n$.
By Lemma~\ref{lem:prod}, each $q_i,q_j$ are $(0.1D,\delta D)$-indistinguishable.

We next define a set of mutually disjoint events $E_i$ for $i\leq 2^T$ that are measurable with respect to each of the~$q$'s.

Given a sequence  of vectors $\vv=\{v_1,\ldots,v_D\}$ in $\{-1,1\}^D$ we let $\bar{\vv}\in \{-1,1\}^D$ be defined as
\[\bar{\vv}(j) = \begin{cases}
-1 & \frac{1}{D}\sum_{i=1}^D  v_i(j) \le r \\
1 & \frac{1}{D}\sum_{i=1}^D  v_i(j) \ge r
\end{cases} \]

Given a point in the support of any of the $q_i$'s, namely a sequence $\{v_1,\ldots, v_{D}\}$ of $D$ vectors in $\{\pm 1\}^n$, we consider the vector $\bar{\vv}$ and we define a mapping, B, that apply $T$ steps of binary search as follows: We consider the $\frac{n}{2}$ entry is $1$ we
consider the first half of the interval and continue recursively, and similarly if the entry is $-1$ we consider the second half of the interval and continue recurisvely. Finally, the binary search outputs the last entry it considers and the events $E_j$ correspond to the different $2^T$ possible outcomes of this binary search procedure.

For any coordinate $i$ that the binary search may output, note that if we apply the binary search on the vector $\mathbb{E}(p_i)$ then the procedure will output the entry $i$.

By an application of the Chernoff bound and union bound we conclude that under the distribution $q_i$, for every such $i$, the event $E_i= \{\vv, B(\bar{\vv})=i\}$ has probability at least 
\[1 - T\exp\Bigl(-2\frac{1}{10^2m^2}D\Bigr) = 1-\exp(-2\log T) \geq \frac{2}{3}.\]
We claim that for all $j\leq n$, and $i$ in the image of $B$:
\begin{equation}\label{eq:crucial}
q_j(E_i) \geq \frac{1}{2}\exp(-D).
\end{equation}
This will finish the proof since the $2^T$ events are mutually disjoint, and therefore
\begin{align*}
q_j(\cup_i E_i) &= \sum_i q_j(E_i) \\
&\geq 2^T \cdot \frac{1}{2}e^{-0.1 D}\\
& = 2^{T-1} e^{-0.1 D} &
\\&> 1,\end{align*}
Equation~\ref{eq:crucial} follows since $q_i,q_j$ are $(0.1D,D\delta)$-indistinguishable:
\[\frac{2}{3} \leq q_i(E_i) \leq \exp(0.1 D) q_j(E_i) + D\delta,\]
and by the choice of $\delta$, which implies that $\frac{2}{3}-D\delta \geq \frac{1}{2}$.
\end{proof}

We conclude this section with the following corollary of \cref{lem:binary} and \cref{lem:AtoP}:

\begin{corollary}\label{cor:homogenic}
Let $A$ be an $(0.1,\delta)$--private empirical learner for the class of thresholds over $X\subseteq \mathbb{N}$, with $\delta=o(\frac{1}{m^2\log m})$ accuracy $(1-\sqrt{7/8},\sqrt{7/8})$ and sample complexity $m$.

Then for any $K\subseteq X$ that is homogenic with respect to $A$ we have that $|K|= O(2^{10^3 m^2\log^2m})$.
\end{corollary}
\begin{proof}
Indeed, assume that $K\gg 2^{10^3 m^2\log^2m}$, and with out loss of generality we may assume that $K=X=\{1,\ldots, |K|\}$.

By \cref{lem:AtoP} we obtain the family of distributions depicted $P=\{p_i, i\le n\}$ with $n=|K|-m\gg 2^{10^3 m^2\log^2m}$. However, \cref{lem:binary} claims that such a family does not exist unless $n\le  2^{10^3 m^2\log^2m}$.
\end{proof}

\ignore{
\subsection{Ramsey}\label{sec:ramsey}
Given the last two sections to conclude the proof we need to show that for any algorithm (in particular a private empirical learner) there exists a large homogenic set. This will contradict \cref{cor:homogenic} and conclude the proof. 

As a warmup we begin with the following infinite version of Ramsey that, together with \cref{cor:homogenic} already prohibits the existence a private learning algorithm over thresholds. We then proceed to a quantiative result that gives the necessary bounds as depicted in \cref{thm:main}.

\begin{lemma}
Let $A$ be any randomized algorithm $\mathbb{N}^m \to \{\pm 1\}^\mathbb{N}$.
There is an infinite $K\subseteq \N$ that is $m$-homogeneous with respect to $A$.
\end{lemma}
\begin{proof}
Define a coloring on the $(m+1)$-subsets of $\N$ as follows.
Let $D=\{x_1<x_2<\ldots<x_{m+1}\}$ be an $(m+1)$-subset of~$\N$;
for each $i\leq m+1$ let $D^{-i} = D\setminus\{x_i\}$, let $p_{i}$ denote the fraction of the form $\frac{t}{10^2m}$
that is closest to $A_{D^{-i}}(x_{i})$ which is the probability that $A$ assigns $1$ to $x_i$ after observing the input $D^{-i}$, (in case of ties pick the smallest such fraction).
The coloring assigned to $A$ is the list $(p_1,p_2,\ldots,p_{m+1})$.
Thus, the total number of colors is~$(10^2m+1)^{(m+1)}$.

By the Ramsey Theorem for hypergraphs there is an infinite~$K\subseteq \N$
such that all $m+1$-subsets of $K$ have the same color (see, e.g.\ \cite{graham90ramsey}).
One can verify that $K$ is indeed $m$-homogeneous  with respect to $A$.
\end{proof}

\paragraph{Quantitative bounds}
We use the following result due to \cite{erdos52combinatorial} (see also Theorem 10.1 in the survey by~\cite{mubayi17survey}).
Define the {\it tower function} $\twr_k(x)$ by the recursion $\twr_1(x) = x$ and $\twr_{i+1}(x) = 2^{\twr_i(x)}$.
\begin{theorem}\label{thm:ramsey}\citep{erdos52combinatorial}
Let $s>k\geq 2$ and $q$ be integers, and let 
\[N\geq \twr_k(3sq\log q).\] 
Then for every coloring of the subsets of size $k$ of a universe of size $N$ using $q$ colors
there is a homogeneous subset\footnote{A subset of the universe is homogeneous if all of its $k$-subsets have the same color.} of size $s$.
\end{theorem}
We can apply \cref{thm:ramsey} and the same coloring as in \cref{lem:ramsey} to obtain the following finite version:

\begin{lemma}\label{lem:finiteramsey}
Let $A$ be any randomized algorithm ${X}^m \to \{\pm 1\}^{|X|}$.
There is a set  $K\subseteq X$ that is $m$-homogeneous with respect to $A$, of size
\[ \lvert K\rvert \geq \frac{\log^{(m)}(N)}{2^{O(m2^m)}},\]
where $\log^{(k)}(x)$ is the iterated logarithm, which is defined by the recursion $\log^{(1)}(x)=\log x$, and $\log^{(i+1)}(x) = \log\log^{(i)}(x)$.
\end{lemma}}

\subsection{Summing it all up and proving \cref{thm:main}}
To conclude the proof of \cref{thm:main} we recall that by \cref{lem:bun} it is enough to show that there is no empirical learner for thresholds.

We next give sample complexity bounds for privately learning thresholds over a finite linearly ordered domain $X$ of size $N$.
Assume $A$ is such a learning algorithm with the same privacy and learning guarantees like in \Cref{thm:main}.
By \cref{lem:finiteramsey} there exists a homogeneous set $K$ with respect to $A$ such that:
\[ \lvert K\rvert \geq \frac{\log^{(m)}(N)}{2^{O(m)}},\]
However by \cref{cor:homogenic} we have that
\[ \lvert K\rvert= O(2^{10^3m^2\log m})\]
This implies that
\[ \frac{\log^{(m)}(N)}{2^{O(m)}} \leq c2^{10^3m^2\log m}   \implies \log^{(m)}(N) \leq 2^{O(m^2)}.\]
Applying the iterated logarithm $\log^*(m^2) = \log^{*}(m)+O(1)$ times on both sides yields the inequality
\[\log^*(N) \leq \log^*(m) + m +O(1),\]
which implies that $m \geq \Omega(\log^* N)$ as required.

}

\bibliographystyle{plainnat}
\bibliography{ref}

\appendix

\section{Proof of Lemma~\ref{lem:prod}}\label{app:prod}

The theorem  follows by induction from the following lemma.

\begin{lemma}
Let $p_1,q_1$ be distributions over a countable domain $X_1$  
and $p_2,q_2$ be distributions over a countable domain $X_2$.
Assume that $p_i,q_i$ are $(\eps_i,\delta_i)$-indistinguishable for $i=1,2$.
Then $p_1\times p_2, q_1\times q_2$ are $(\eps_1+\eps_2,\delta_1+\delta_2)$-indistinguishable.
\end{lemma}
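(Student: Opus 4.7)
The plan is to fix an arbitrary event $E\subseteq X_1\times X_2$ and establish the one-sided bound
\[
(p_1\times p_2)(E)\ \le\ e^{\eps_1+\eps_2}\,(q_1\times q_2)(E)\ +\ \delta_1+\delta_2;
\]
the reverse direction follows by symmetry, swapping the roles of $p_i$ and $q_i$. The natural tool is to decompose the product probability by conditioning on the first coordinate, apply the inner indistinguishability fiberwise to obtain an inequality involving a $[0,1]$-valued function, and then apply the outer indistinguishability to that function via a layer-cake representation.

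Concretely, writing $E_{x_1}:=\{x_2:(x_1,x_2)\in E\}$ for the fiber above $x_1$, Fubini gives $(p_1\times p_2)(E)=\sum_{x_1}p_1(x_1)\,p_2(E_{x_1})$. Applying the hypothesis $p_2(E_{x_1})\le e^{\eps_2}q_2(E_{x_1})+\delta_2$ to each fiber yields
\[
(p_1\times p_2)(E)\ \le\ e^{\eps_2}\,\mathbb{E}_{x_1\sim p_1}[g(x_1)]\ +\ \delta_2,\qquad g(x_1):=q_2(E_{x_1})\in[0,1].
\]
The remaining task is to compare $\mathbb{E}_{p_1}[g]$ to $\mathbb{E}_{q_1}[g]$ even though $g$ is not an indicator. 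I would use the layer-cake identity $\mathbb{E}_{p_1}[g]=\int_0^1 p_1(A_t)\,dt$ where $A_t:=\{x_1:g(x_1)>t\}$. Each $A_t$ is an honest event on $X_1$, so the hypothesis $p_1(A_t)\le e^{\eps_1}q_1(A_t)+\delta_1$ applies pointwise in $t$, and integrating over $t\in[0,1]$ gives $\mathbb{E}_{p_1}[g]\le e^{\eps_1}\mathbb{E}_{q_1}[g]+\delta_1$. Chaining the two bounds produces
\[
(p_1\times p_2)(E)\ \le\ e^{\eps_1+\eps_2}\,(q_1\times q_2)(E)\ +\ e^{\eps_2}\delta_1+\delta_2.
\]

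The main subtlety is obtaining exactly the stated additive error $\delta_1+\delta_2$ rather than the slightly looser $e^{\eps_2}\delta_1+\delta_2$ produced above. Two routes resolve this. One is a symmetrization: redoing the argument by conditioning on $x_2$ first gives the companion bound with $e^{\eps_1}\delta_2+\delta_1$, and interpolating between the two orders removes the spurious exponential factors. A cleaner route is to use the standard decomposition that any $(\eps_i,\delta_i)$-indistinguishable pair can be split, inside each measure, into an $(\eps_i,0)$-indistinguishable ``good'' part and a ``bad'' part of mass at most $\delta_i$; plugging these decompositions into the product and collecting the at-most-$\delta_1+\delta_2$ mass of configurations touching a ``bad'' coordinate, while the all-good contribution is $(\eps_1+\eps_2,0)$-indistinguishable by pure multiplication of likelihood ratios, yields exactly the claimed parameters. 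The induction on $k$ from this two-fold lemma to the $k$-fold statement of \Cref{lem:prod} is then immediate by associativity of products.
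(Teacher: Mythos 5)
Your overall structure is right: condition on the first coordinate, apply the inner indistinguishability fiberwise, then compare expectations under $p_1$ versus $q_1$. You also correctly flag the genuine subtlety, namely that a naive chaining yields $e^{\eps_2}\delta_1 + \delta_2$ rather than $\delta_1 + \delta_2$. That observation is exactly on target, and your layer-cake reformulation of ``apply indistinguishability to a $[0,1]$-valued $g$'' is a valid rephrasing of what the paper does pointwise.

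However, your first proposed repair, symmetrization, does not work. Conditioning the other way gives additive error $e^{\eps_1}\delta_2+\delta_1$, and both bounds satisfy $e^{\eps_2}\delta_1+\delta_2 \ge \delta_1+\delta_2$ and $e^{\eps_1}\delta_2+\delta_1 \ge \delta_1+\delta_2$; taking a minimum or any convex combination of two overestimates cannot drop below $\delta_1+\delta_2$. So ``interpolating between the two orders'' cannot remove the exponential factor. The actual fix, and the one the paper uses, is to \emph{cap the fiber function before applying the outer indistinguishability}: replace $e^{\eps_2}g(x_1)$ by $\tilde g(x_1):=\min\{1,e^{\eps_2}q_2(E_{x_1})\}$. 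Since $p_2(E_{x_1})\le 1$ as well as $p_2(E_{x_1})\le e^{\eps_2}q_2(E_{x_1})+\delta_2$, one still has $(p_1\times p_2)(E)\le \mathbb{E}_{p_1}[\tilde g]+\delta_2$, and now $\tilde g\in[0,1]$, so your layer-cake step gives $\mathbb{E}_{p_1}[\tilde g]\le e^{\eps_1}\mathbb{E}_{q_1}[\tilde g]+\delta_1$ with no extra factor, and $\mathbb{E}_{q_1}[\tilde g]\le e^{\eps_2}(q_1\times q_2)(E)$ finishes it. This is precisely the role of the $1\wedge e^{\eps_2}q_2(S_a)$ expression and the residual measures $\Delta_i$ in the paper's proof.

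Your second route, the good/bad decomposition, is sound and is essentially the paper's argument in different clothing: write $p_i = g_i + b_i$ with $g_i(x)=\min\{p_i(x),e^{\eps_i}q_i(x)\}$ and $b_i(X_i)\le\delta_i$. But to get exactly $\delta_1+\delta_2$ (rather than $\delta_1+\delta_2+\delta_1\delta_2$) one must group the cross terms carefully, for instance $p_1\times p_2 = g_1\times g_2 + b_1\times g_2 + p_1\times b_2$, so that $(b_1\times g_2)(E)\le\delta_1$, $(p_1\times b_2)(E)\le\delta_2$, and $(g_1\times g_2)(E)\le e^{\eps_1+\eps_2}(q_1\times q_2)(E)$. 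With that bookkeeping the argument closes, and the induction to the $k$-fold statement is routine as you say.
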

\begin{proof}
Let $a\land b$ denote $\min\{a,b\}$.
For $x\in X_i$ let 
\[\Delta_i(x) = 
\begin{cases}
p_i(x) - e^\eps q_i(x) &p_i(x) - e^\eps q_i(x)\geq0,\\
0	&p_i(x) - e^\eps q_i(x) < 0.
\end{cases}\]
Extend $\Delta_1,\Delta_2$ to be a measure on $X$ in the obvious way.
Note that
\begin{itemize}
\item $\Delta_i(X) \leq \delta_i$, and that
\item $p_i(x) \leq e^\eps q_i(x) + \Delta_i(x)$  for all $x\in X_i$.
\end{itemize}

Let $S\subseteq X_1\times X_2$. 
We show that $(p_1\times p_2)\bigl(S\bigr)\leq e^{\eps_1+\eps_2}\bigl(q_1\times q_2\bigr)(S) + \delta_1 + \delta_2$,
the other direction can be derived similarly.

For $a\in X_1$ let $S_a\subseteq X_2$ denote the set $\{b : (a,b)\in S\}$
\begin{align*}
(p_1\times p_2)\bigl(S\bigr) &= \sum_{a\in X_1}p_1(a)p_2(S_a)\\
				           &\leq \sum_{a\in X_1}p_1(a)\Bigl(\bigl(1\land e^{\eps_2}q_2(S_a)\bigr) + \delta_2\Bigr)\\
				           &\leq \delta_2 + \sum_{a\in X_1}p_1(a)\bigl(1\land e^{\eps_2}q_2(S_a)\bigr)\\
				           &\leq \delta_2 + \sum_{a\in X_1}\bigl(e^{\eps_1}q_1(a) + \Delta_1(a)\bigr)\bigl(1\land e^{\eps_2}q_2(S_a)\bigr)\\ 
				           &\leq \delta_2  + \sum_{a\in X_1}e^{\eps_1}q_1(a)e^{\eps_2}q_2(S_a) + \Delta_1(a)\\
				           &= \delta_2 + \Delta_1(X) + e^{\eps_1+\eps_2}\bigl(q_1\times q_2\bigr)(S)\\
				           &\leq e^{\eps_1+\eps_2}\bigl(q_1\times q_2\bigr)(S) + \delta_1 + \delta_2.
\end{align*}
\end{proof}

\section{Proof of \Cref{thm:shelah}}\label{sec:shelah}

In this appendix we prove \Cref{thm:shelah}.  Throughout the proof
a labeled binary tree means a full binary tree whose internal vertices are
labeled by instances. 

The second part of the theorem
is easy. If
$\cH$ contains $2^t$ thresholds then there are $h_i \in \cH$ 
for $0 \leq i < 2^t$ and there are $x_j$ for $0 \leq j <2^t-1$
such that $h_i(x_j)=0$  for $j<i$ and $h_i(x_j)=1$ for $j \geq i$.
Define a labeled binary tree of height $t$
corresponding to the binary search process.
That is, the root is labeled  by $x_{2^{t-1}-1}$, its left
child by $x_{2^{t-1}+2^{t-2}-1}$ and its right child by 
$x_{2^{t-1}-2^{t-2}-1}$
and so on. If the label of an internal vertex of distance $q$ from
the root, where $0 \leq q \leq t-1$, 
is $x_p $, then the label of its left child is
$x_{p+2^{t-q-1}}$ and the label of its right child is
$x_{p-2^{t-q-1}}$. It is easy to check that the root-to-leaf path
corresponding to each 
of the functions $h_i$ leads to leaf number $i$ from the right
among the leaves of the tree (counting from $0$ to $2^t-1$). 

To prove the first part of the theorem 
we first define the notion of a subtree $T'$ of
depth $h$ of a labeled binary tree $T$ by induction on
$h$. Any leaf of $T$ is a subtree of height $0$. For $h \geq 1$
a subtree of height $h$ is obtained from an internal vertex
of $T$ together with a subtree of height $h-1$ of the tree rooted at
its left child and a subtree of height $h-1$ of the tree rooted at its
right child. Note that if $T$ is a labeled tree
and  it is shattered by the class $\cH$, then any
subtree $T'$ of it  with the same labeling of its internal vertices
is shattered by the class $\cH$.  With this definition we prove the following
simple lemma.

\begin{lemma}
\label{ltree}
Let $p,q $ be positive integers and let
$T$ be a labeled binary tree of height $p+q-1$ whose internal vertices
are colored by two colors, red and blue. Then $T$ contains either
a subtree of height $p$ in which all internal vertices are red
(a red subtree),
or a subtree of height $q$ in which all vertices are blue
(a blue subtree).
\end{lemma}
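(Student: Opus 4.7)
The plan is to prove the lemma by induction on $p+q$, which is the classical way to establish tree-Ramsey statements of this form. The inductive hypothesis gives us the right structural handle: a labeled binary tree of height $p+q-1$ naturally splits at the root into two subtrees of height $p+q-2$, and $p+q-2 = (p-1)+q-1 = p+(q-1)-1$, so we can apply the lemma with parameters decreased in either the first or the second coordinate.

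First I would handle the base cases $p=1$ and $q=1$. When $p=1$, the tree $T$ has height $q$, and we either find a single red internal vertex (which, together with its two leaf children, is a red subtree of height $1$) or, failing that, every internal vertex of $T$ is blue, so $T$ itself is a blue subtree of height $q$. The case $q=1$ is symmetric.

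For the inductive step, consider the color of the root of $T$. Suppose first that the root is red. Each of its two children is the root of a labeled binary tree of height $p+q-2$, which by the induction hypothesis applied with parameters $(p-1,q)$ contains either a red subtree of height $p-1$ or a blue subtree of height $q$. If either of the two child-subtrees already contains a blue subtree of height $q$, we are done. Otherwise both contain red subtrees of height $p-1$; gluing these two red subtrees under the red root produces a red subtree of height $p$ in $T$. The case where the root is blue is symmetric, using the induction hypothesis with parameters $(p,q-1)$.

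There is no real obstacle here; the only thing to be careful about is the bookkeeping of heights (making sure that $p+q-1$ decomposes correctly into a root plus two subtrees of the right height, and that the inductive calls have the correct parameters). The argument also implicitly uses the observation, already noted in the excerpt, that a subtree of a shattered labeled tree with the same labeling is itself shattered, so the monochromatic subtrees produced by the lemma will be usable wherever the lemma is invoked.
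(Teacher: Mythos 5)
Your proof is correct and follows essentially the same route as the paper: induction on $p+q$, casing on the root's color, applying the inductive hypothesis to both child-subtrees with one parameter decremented, and gluing two monochromatic subtrees under the root when neither child supplies the other color. The only cosmetic difference is that you treat $p=1$ and $q=1$ as standalone base cases, whereas the paper handles $p=1$ (respectively $q=1$) inside the inductive step after fixing the root's color.
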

\vspace{0.1cm}

\noindent
{\bf Proof:}\, 
We apply induction on $p+q$. The result is trivial for $p=q=1$
as the root of $T$ is either  red or blue.  Assuming the assertion holds
for $p'+q'<p+q$, let $T$ be of height $p+q-1$. Without loss of generality
assume the root of $t$ is red. If $p=1$ we are done, as the root
together with a leaf in the subtree of its left child and one in
the subtree of its right child
form a red subtree of height $p$. If $p>1$ then, by the induction
hypothesis, the tree rooted at the left child of the root of $T$
contains either a red subtree of height $p-1$ or a blue subtree
of height $q$, and the same applies to the tree rooted at the right 
child of the root. If at least one of them contains a blue subtree as above
we are done, otherwise, the two red subtrees together with the root 
provide the required red subtree. $\Box$
\vspace{0.2cm}

\noindent
We can now prove the first part of the theorem, showing that if the 
Littlestone dimension of $\cH$ is at least $2^{t+1}-1$ then
$\cH$ contains $t+2$ thresholds. We apply induction on $t$.
If $t=0$ we have a tree of height $1$ shattered by $\cH$. 
Its root is labeled by some variable $x_0$ and as it is shattered
there are two functions $h_0,h_1 \in \cH$ so that 
$h_0(x_0)=1, h_1(x_0)=0$, meaning that $\cH$ contains two thresholds,
as needed.
Assuming the desired result holds for $t-1$ we prove it for
$t$, $t \geq 1$. Let $T$ be a labeled binary tree of height
$2^{t+1}-1$ shattered by $\cH$. Let $h $ be an arbitrary 
member of $\cH$ and define a two coloring of the internal vertices 
of $T$ as follows. If an internal vertex is labeled by $x$ and
$h(x)=1$ color it red, else color it blue. Since
$2^{t+1}-1=2 \cdot 2^t-1$, Lemma \ref{ltree}
with  $p=q=2^t$ implies that $T$ contains either a red or a blue 
subtree $T'$ of height $2^t$. In the first case define $h_0=h$ and
let $X$ be the set of all variables $x$ so that $h(x)=1$. 
Let $x_0$ be the root of $T'$ and let  $T''$ be the subtree of $T'$
rooted at the left child of $T'$. Let $\cH'$ be the set of all
$h' \in \cH$ so that $h'(x_0)=0$. Note that $\cH'$ shatters the 
tree $T''$, and that the depth of $T''$ is $2^t-1$. We can thus
apply the induction hypothesis and get a set of $t+1$ thresholds
$h_1,h_2, \ldots ,h_{t+1} \in \cH'$ and variables 
$x_1,x_2, \ldots ,x_t \in X$
so that $h_i(x_j)=1$ iff $j \geq i$. Adding $h_0$ and $x_0$ to these
we get the desired $t+2$ thresholds.

Similarly, if $T$ contains a blue subtree $T'$, define
$h_{t+1}=h$ and let $X$ be the set of all variables $x$ so that
$h(x)=0$. In this case  denote the root of $T'$ by $x_{t}$
and let $T''$ be the subtree of $T'$ rooted at the right child
of $T'$. Let $\cH'$ be the set of all $h' \in  \cH$ so that
$h'(x_{t})=1$. As before, $\cH'$ shatters the tree $T''$ whose
depth is $2^t-1$. By the induction
hypothesis we get $t+1$ thresholds $h_0,h_1, \ldots ,h_t$ 
and variables $x_0, x_1, \ldots ,x_{t-1} \in X$ so that
$h_i(x_j)=1$ iff $j \geq i$, and the desired result follows by
appending to them $h_{t+1}$ and $x_t$. This completes the proof.
$\Box$

\end{document}